\documentclass[english]{IEEEtran}
\usepackage[T1]{fontenc}
\usepackage[latin9]{inputenc}
\usepackage{color}
\usepackage{babel}
\usepackage{float}
\usepackage{amsmath}
\usepackage{amsthm}
\usepackage{amssymb}
\usepackage{stmaryrd}
\usepackage{stackrel}
\usepackage{graphicx}
\usepackage{geometry}
\usepackage{multirow}
\PassOptionsToPackage{normalem}{ulem}
\usepackage{ulem}
\usepackage[]
 {hyperref}

\makeatletter

\newcommand{\lyxmathsym}[1]{\ifmmode\begingroup\def\b@ld{bold}
  \text{\ifx\math@version\b@ld\bfseries\fi#1}\endgroup\else#1\fi}

\floatstyle{ruled}
\newfloat{algorithm}{tbp}{loa}
\providecommand{\algorithmname}{Algorithm}
\floatname{algorithm}{\protect\algorithmname}

\providecommand{\tabularnewline}{\\}
\floatstyle{ruled}
\newfloat{algorithm}{tbp}{loa}
\providecommand{\algorithmname}{Algorithm}
\floatname{algorithm}{\protect\algorithmname}

\theoremstyle{definition}
\newtheorem{example}{\protect\examplename}
\theoremstyle{definition}
\newtheorem{problem}{\protect\problemname}
\theoremstyle{definition}
\newtheorem{defn}{\protect\definitionname}
\theoremstyle{plain}
\newtheorem{thm}{\protect\theoremname}
\theoremstyle{plain}

\theoremstyle{plain}

\newtheorem{rem}{Remark}  % 定义rem环境，显示为 "Remark 1" 样式

\usepackage{cite}
\usepackage{algpseudocode}
\usepackage{setspace}
\IEEEoverridecommandlockouts

\makeatother

\usepackage{babel}
\providecommand{\corollaryname}{Corollary}
\providecommand{\definitionname}{Definition}
\providecommand{\examplename}{Example}
\providecommand{\lemmaname}{Lemma}
\providecommand{\problemname}{Problem}
\providecommand{\theoremname}{Theorem}

\begin{document}
\title{Temporal Logic Guided Universal Task Representations for Reinforcement Learning
\thanks{{H. Zhang, Z. Zhou, and Z. Kan (Corresponding Author) are with the Department
of Automation at the University of Science and Technology of China,
Hefei, Anhui, China, 230026. 

H. Zhang and Z. Zhou are also with the Anhui Provincial Key Laboratory of Humanoid Robots, Hefei, China, 230000.

This work was supported in part by the National Natural Science Foundation
of China under Grant U25A20473, and in part by the Open Project of Anhui Provincial Key Laboratory of Humanoid Robots under Grant JH-RX-2025-0201. The AI-driven experiments, simulations
and model training were performed on the robotic AI-Scientist platform of
Chinese Academy of Sciences.}
}}
\author{Hao Zhang, Zhangli Zhou, and Zhen Kan}
\maketitle
\begin{abstract}
Task guided agents demonstrate strong performance in a wide range of complex
tasks. However, most existing task representation algorithms are tailored to specific contexts and struggle to generalize across diverse scenarios. Moreover, they typically depend on gradient signals from reinforcement learning controllers to update their weights, which can degrade both representation quality and learning efficiency.  \textcolor{black}{To overcome these limitations,
we propose LOTUS, a temporal }\textcolor{black}{\uline{lo}}\textcolor{black}{gic
inspired }\textcolor{black}{\uline{u}}\textcolor{black}{niversal }\textcolor{black}{\uline{t}}\textcolor{black}{ask
repre}\textcolor{black}{\uline{s}}\textcolor{black}{entation framework
that can be seamlessly integrated into any RL algorithm to enhance
agent performance across diverse task settings.} Specifically, we design a novel task representation architecture capable of modeling relationships and extracting task semantics from LTL formulas. We further introduce a more effective update mechanism that treats the LTL encoder as a policy, thereby improving representation capacity. To enhance stability and robustness, LOTUS leverages the bisimulation metric, which provides theoretical guarantees for LTL representation, including behavioral equivalence, optimality fidelity, and trajectory robustness. Experimental results show that LOTUS outperforms most existing methods in learning efficiency, generalization capability, and representation quality. \textcolor{black}{Specifically,
LOTUS accelerates convergence over 20\% in single-task scenarios,
achieves a 15\%--45\% higher success rate in unseen manipulation
tasks, and improves generalization performance over 25\% in complex
multi-task environments with increased sub-goal depth or conjunctions.} The corresponding code, videos, and appendix are available at: \href{https://lotus-website.github.io/}{https://lotus-website.github.io/}. 

\global\long\def\prog{\operatorname{prog}}%
\global\long\def\argmax{\operatorname{argmax}}%
\global\long\def\argmin{\operatorname{argmin}}%
\end{abstract}

\section{Introduction}

One of the ultimate goals in robot learning is to enable robots to comprehend and plan effectively across diverse task scenarios based on task instructions. While deep reinforcement learning (DRL)
has shown great potentials \cite{Sutton2018}, it often struggles
to handle complex tasks due to challenges in exploration and semantic understanding. Recent research has addressed these issues by parsing task instructions or leveraging task semantics to guide robots in diverse settings. For example, task encoders have been employed to extract task semantics, thereby helping agents explore and learn the necessary skills at different task stages \cite{vaezipoor2021ltl2action,zhang2023exploiting,yalcinkaya2024compositional}.
\textcolor{black}{However, much of the existing work focuses on single-task learning, whereas the ability to generalize across multiple tasks is equally critical for assessing an agent\textquoteright s planning capabilities \cite{farhadi2024domain, botteghi2025unsupervised}.} Recent advances have shown promise in enabling zero-shot generalization in multi-task environments through task relationship modeling  \cite{zhang2021survey,cheng2023multi,zhu2024multi}
or temporal abstraction frameworks \cite{yoo2022learning,qiu2023instructing,wang2023task}.
Nevertheless, these approaches often fall short in both single-task
and multi-task scenarios, facing challenges related to training efficiency,
generalization performance, and interpretability.

An effective approach to improving policy performance across diverse task scenarios is to guide agents using task semantics. Research in this area can be broadly divided into two directions: (1) task planning driven by foundational models and (2) semantic-guided learning through task representation modules. The first line of work leverages large language models (LLMs) to decompose complex tasks into structured subtasks and generate coherent instructions, thereby enabling robots to complete tasks more effectively \cite{cao2024survey}. This approach has already shown promising results in motion planning \cite{ding2023task,chen2024autotamp,zhou2024avatargpt} and object manipulation \cite{wu2025momanipvla,wen2025tinyvla,liu2024robomamba}. However, challenges remain, including semantic inconsistency, reliance on prompt engineering, and limited planning capability in handling complex task instructions. The second line of work employs neural network based task encoders to represent task semantics from instructions, thereby enhancing agent training efficiency. For instance, graph neural networks (GNNs) \cite{scarselli2008graph,schlichtkrull2018modeling,brody2021attentive} excel at modeling relationships among objects or task targets, providing agents with effective feedback and guidance during execution. Similarly, Transformer-based architectures \cite{vaswani2017attention} leverage self-attention to capture global dependencies in complex tasks and integrate multimodal information \cite{wang2024navformer,chen2023transformer,firoozi2025foundation}. Despite these advances, existing methods rarely address task scenarios that involve intricate logical dependencies and temporal constraints. 

Recent research has increasingly focused on policy
learning for complex tasks through formal methods. As a formal language,
linear temporal logic (LTL) has been widely adopted to describe complex
logic and temporal constraints due to its rich expressivity \cite{Baier2008}. Approaches that guide agents with LTL can be broadly categorized into two groups: (1) automaton-based methods and (2) task-encoder-based methods. Automaton-based methods construct task-specific automata to represent logical and temporal structures, enabling agents to perform effectively across various task types \cite{Li2019,Cai2021b,Cai2021c}. However, these approaches typically encode automaton states as ordered indices, a representational form that provides limited benefit to RL performance. To enhance the utility of task representations, task encoder based methods leverage LTL progression \cite{tuli2022learning} to track task stages and encode the corresponding LTL formulas into semantic representations. These representations then guide agents toward more effective exploration and learning \cite{vaezipoor2021ltl2action,zhang2023exploiting,yalcinkaya2024compositional}. Nonetheless, existing LTL encoders are either specialized for generalization in multi-task settings or limited to single-task learning. \textcolor{black}{Consequently, no universal framework currently
exists that can achieve robust performance across diverse task scenarios
\cite{rezaei2024visual,noori2025latent}}.

Meanwhile, recent advances have been devoted to improving
the generalization performance through goal-conditional RL (GCRL)
with LTL instructions \cite{qiu2023instructing,deepltl}. Such
methods typically follow a two-phase process: first training sub-task policies, and then invoking them at the task-planning level to accomplish tasks efficiently. However, methods that rely on LTL representations generally update the encoder indirectly through RL, making the task representations unstable and potentially degrading the agent\textquoteright s policy performance.

The main motivation of this work is to design an LTL encoder architecture that can not only guide agents efficiently in accomplishing complex tasks across diverse scenarios, but also update its weights directly through a well-designed loss function to improve training efficiency. To this end, we
propose LOTUS, a temporal \uline{lo}gic
inspired \uline{u}niversal \uline{t}ask
repre\uline{s}entation framework
that introduces a new LTL encoder architecture along with a tailored
loss function to enhance learning efficiency in guiding the agent. Specifically, LOTUS incorporates a novel LTL encoder, termed Relational Graph Transformer Networks (RGTN), which effectively models the relationships among atomic propositions in LTL formulas, extracts task semantics, and accelerates policy learning in both single-task and multi-task settings. To eliminate dependence on RL modules for updating the encoder, LOTUS further introduces TLPG, an update mechanism that leverages \uline{p}olicy \uline{g}radient to accelerate
the convergence of the L\uline{TL} encoder by constructing the corresponding task-level MDP. Stability of the TLPG update is further reinforced through the bisimulation metric. In addition, LOTUS provides explicit theoretical guarantees for LTL representations, including behavioral equivalence, optimality fidelity, and trajectory robustness.

The main contributions are summarized as follows:
\begin{itemize}
\item We propose a task representation algorithm that can be seamlessly integrated into any RL framework, boosting sampling efficiency and enhancing performance and generalization across both single and multi-task learning.
\item We design the RGTN as a task representation module that efficiently encodes LTL instructions, extracts task semantics through specialized attention mechanisms.
\item To our knowledge, we are the first to directly update the LTL encoder as a policy using rewards, with theoretical guarantees based on the bisimulation metric.
\item Extensive experiments demonstrate that LOTUS significantly outperforms baseline methods in learning efficiency, generalization, and representation quality.
\end{itemize}

\textcolor{black}{The remainder of this work is organized as follows.
Sec. \ref{sec:Pre} presents preliminaries and problem formulation.
Sec. \ref{sec:Algorithm Design} elaborates the LOTUS framework design,
encompassing Relational Graph Transformer Network (RGTN) encoder,
direct update mechanism (TLPG), bisimulation metric optimization,
and theoretical guarantees. Sec. \ref{sec:EXPERIMENTS} validates
LOTUS via extensive experiments, including multiple task scenarios,
generalization evaluation, ablation studies, and real-world performance.
Sec. \ref{sec:Conclusion} concludes with key contributions and future
directions.}

\section{Preliminaries\label{sec:Pre}}

\textcolor{black}{To enhance the readability, we summarize the main
symbol appointment of this work in our \href{https://lotus-website.github.io}{website}.}

\subsection{sc-LTL and LTL Progression\label{subsec:sc-LTL}}

Co-safe LTL (sc-LTL) is a subclass of LTL that can be satisfied by
finite-horizon state trajectories \cite{Kupferman2001}. Since sc-LTL
is suitable to describe robotic instructions (e.g., approach the bucket,
grasp the handle, and then lift it up to the table), this work focuses
on sc-LTL. An sc-LTL formula is built on a set of atomic propositions
$\Pi$ that can be true or false, standard Boolean operators such
as $\wedge$ (conjunction), $\lor$ (disjunction), and $\lnot$ (negation),
temporal operators such as $\bigcirc$ (next), $\diamondsuit$ (eventually),
and $\cup$ (until). The semantics of an sc-LTL formula are interpreted
over a word $\boldsymbol{\sigma}=\sigma_{0}\sigma_{1}...\sigma_{n}$,
which is a finite sequence with $\sigma_{i}\in2^{\Pi}$, $i=0,\ldots,n$,
where $2^{\Pi}$ represents the power set of $\Pi$. Denote by $\left\langle \boldsymbol{\sigma},i\right\rangle \vDash\varphi$
if the sc-LTL formula $\varphi$ holds from position $i$ of $\boldsymbol{\sigma}$.
More detailed explanations and examples can be found in \cite{Baier2008}. 

LTL formulas can also be progressed along a sequence of truth assignment
\cite{tuli2022learning}. Specifically, give an LTL formula $\varphi$
and a word $\sigma=\sigma_{0}\sigma_{1}...$, the LTL progression
$\mathrm{prog}(\sigma_{i},\varphi)$ at step $i$ is $\mathrm{prog}(\sigma_{i},p)=\mathrm{True}$
if $p\in\sigma_{i}$, where $p\in\Pi$ and $\mathrm{prog}(\sigma_{i},p)=\mathrm{False}$
otherwise. The operator $\mathrm{prog}$ takes an LTL formula $\varphi$
and the current label $\sigma_{i}$ as input at each step, and outputs
a formula indicating the remaining instructions to be addressed.

\subsection{Labeled MDP and Reinforcement Learning}

When performing the sc-LTL task $\varphi$ in RL, the interaction
between the robot and the environment can be modeled by a labeled
MDP $\mathcal{M}_{e}=\left(S,T,A,p_{e},\Pi,L,R,\gamma,\mu\right)$,
where $S$ is the state space, $T\subseteq S$ is a set of terminal
states, $A$ is the action space, $p_{e}(s'|s,a)$ is the transition
probability from $s\in S$ to $s'\in S$ under action $a\in A$, $\Pi$
is a set of atomic propositions indicating the properties associated
with the states, $L:S\rightarrow2^{\Pi}$ is the labeling function,
$R:S\times A\rightarrow\mathbb{R}$ is the reward function, $\gamma\in\left(0,1\right]$
is the discount factor, $\mu$ is the initial state distribution.
The labeling function $L$ can be seen as a set of event detectors
that trigger when $p\in\Pi$ presents in the environment, allowing
the robot to determine whether or not an LTL specification is satisfied. 

For any task $\varphi$, the robot interacts with the environment
following a deterministic policy $\pi_{e}$ over $\mathcal{M}_{e}$,
which $\pi_{e}:S\rightarrow A$ maps each state $s_{t}$ to an action
$a_{t}$ over the action space $A$. The robot then receives a reward
by $r_{t}=R(s_{t},a_{t})$. Given a task $\varphi$, the goal of the
agent is to learn an optimal policy $\pi_{e}^{*}(a|s)$ that maximizes
the expected discounted return $\underset{\tau\sim\pi_{e}}{\mathbb{E}}\left[\stackrel[k=0]{\infty}{\sum}\gamma^{k}r_{t+k}\mid S_{t}=s,A_{t}=a\right]$
starting from any state $s\in S,$ action $a\in A$ and time step $t$. 

\subsection{Bisimulation Metrics in RL\label{subsec:BS_in_RL}}

In the high-dimensional state space of RL, clustering similar states into the same set can improve robustness
\cite{zhang2020learning}. Bisimulation is a state abstraction technique that groups two states $s_{i}$ and $s_{j}$ as equivalent
if they exhibit identical behavior. To quantify the similarity between two states, the bisimulation metric is defined using the $p$-th Wasserstein distance $W_{p}\left(P_{1},P_{2}\right)$ \cite{villani2008optimal} between two probability distributions $P_{1}$ and $P_{2}$.
\begin{defn}
\label{def:bs_in_rl}(Bisimulation metric \cite{ferns2011bisimulation})
Given a reward function $R:S\times A\rightarrow[0,1]$
and a weighting coefficient $c\in(0,1)$ for continuous MDPs, the following metric 
\[
\begin{array}{cc}
d\left(s_{i},s_{j}\right)= & \underset{a\in\mathcal{A}}{\max}\left(1-c\right)\left|R\left(s_{i},a\right)-R\left(s_{j},a\right)\right|\\
 & +cW_{1}\left(p_{e}\left(\cdot\mid s_{i},a\right),p_{e}\left(\cdot\mid s_{j},a\right)\right)
\end{array}
\]
exists and is unique, where $p_{e}\left(\cdot\mid\cdot,a\right)$ is the
transition probability from current state to the next state under
action $a\in A$. 
% and $d\left(s_{i},s_{j}\right)$ represents the bisimulation metric between the state $s_{i}$ and $s_{j}$.
\end{defn}

\subsection{Challenges and Problem Formulation}

To elaborate the problem and challenges, the following running example
will be used throughout the work

\begin{example}
\label{example1}

Consider diverse task scenarios \cite{vaezipoor2021ltl2action,nasiriany2022augmenting,bao2023dexart}
shown in Fig. \ref{fig:Outline4Example}, \textcolor{black}{in which
the robot need to complete different complex tasks shown in Table
\ref{tab:task_in_ex} by the guidance of the same task representation module when given
different sets of propositions.} Different colored
trajectories present possible solutions for the robot in completing
the tasks. \textcolor{black}{Detailed task descriptions are available
on our \href{https://lotus-website.github.io}{website}}\textcolor{black}{.}
\end{example}

There are three main challenges here in Example \ref{example1}. The
first challenge is how to design the LTL encoder so that its output
LTL representation can effectively guide the agent to learn in both
single-task and multi-task environments. The second challenge is how
to design a more effective update way for the task representation
module to avoid the previous limitation of indirectly updating the
weights through the RL controller to further improve the learning
efficiency of the agent. The third challenge is how to further improve
the representation quality of stability and robustness during the
training process.

To this end, our goal is to design a universal task representation
module $\varTheta$ with weights $\xi$, which not only facilitates
the agent's leaning in diverse task scenarios, but also updates its
representation more effectively and stably. By exploiting the LTL
progression stated in Sec. \ref{subsec:sc-LTL}, we develop an augmented
MDP with an LTL instruction $\varphi$, namely the task-driven labeled
MDP (TL--MDP), for LOTUS as follows.
\begin{defn}
\textbf{\label{Def:TL-MDP}}Given a labeled MDP $\mathcal{M}_{e}=\left(S,T,A,p_{e},\Pi,L,R,\gamma,\mu\right)$
corresponding to an LTL task $\varphi$, the TL--MDP is constructed
by augmenting $\mathcal{M}_{e}$ to $\mathcal{M_{\varphi}}\triangleq\left(\tilde{S},\tilde{T},A,\tilde{p},\Pi,L,\tilde{R}_{\varphi},\gamma,\mu\right)$,
where $\tilde{S}=S\times\mathrm{cl}(\varphi)$, $\tilde{T}=\{(s,\varphi)|s\in T,\text{or }\varphi\in\{\mathrm{True},\mathrm{False}\}\},$
$\tilde{p}((s^{'},\varphi^{'})|(s,\varphi),a)=p_{e}(s'|s,a)$ if $\varphi^{'}=\mathrm{prog}(L(s),\varphi)$
and $\tilde{p_{i}}((s^{'},\varphi^{'})|(s,\varphi),a)=0$ otherwise,
and $\tilde{R}_{\varphi}$ is the reward function associated with
the task $\varphi$ to overcome the non-Markovian reward issue which
can be written as
\begin{equation}
\tilde{R}_{\varphi}(s,\varphi)=\begin{cases}
r_{env}+r_{\varphi}, & \text{if \ensuremath{\mathrm{prog}(L(s),\varphi)=\mathrm{True}}}\\
r_{env}-r_{\varphi}, & \text{if \ensuremath{\mathrm{prog}(L(s),\varphi)=\mathrm{False}}}.\\
r_{env}, & otherwise
\end{cases}\label{eq:Markovian Reward}
\end{equation}
where $\mathrm{cl}(\varphi)$ denotes the progression closure of $\varphi$,
$r_{env}$ is the environmental reward, and $r_{\varphi}=\overline{R}(\varphi,\sigma)$
corresponds to the extra reward when $\sigma$ satisfies the LTL task
$\varphi$. Hence, the problem in this work can be stated as follows.
\end{defn}
\begin{problem}
\label{Prob1}Given a labeled TL-MDP $\mathcal{M_{\varphi}}\triangleq\left(\tilde{S},\tilde{T},A,\tilde{p},\Pi,L,\tilde{R}_{\varphi},\gamma,\mu\right)$
corresponding to task $\varphi$ with policy $\pi_{\mathcal{\varphi}}^ {}$,
the goal of this work is to design a universal task representation
model $\varTheta$ parameterized by weights $\xi$, that guides $\pi_{\mathcal{\varphi}}$ toward optimal performance, i.e., the expected return $\underset{\tau\sim\pi_{\varphi}}{\mathbb{E}}\left[\stackrel[k=0]{\infty}{\sum}\gamma^{k}r_{t+k}\mid S_{t}=s\right]$ is maximized.
\end{problem}

\begin{figure}
\centering{}\includegraphics[scale=0.21]{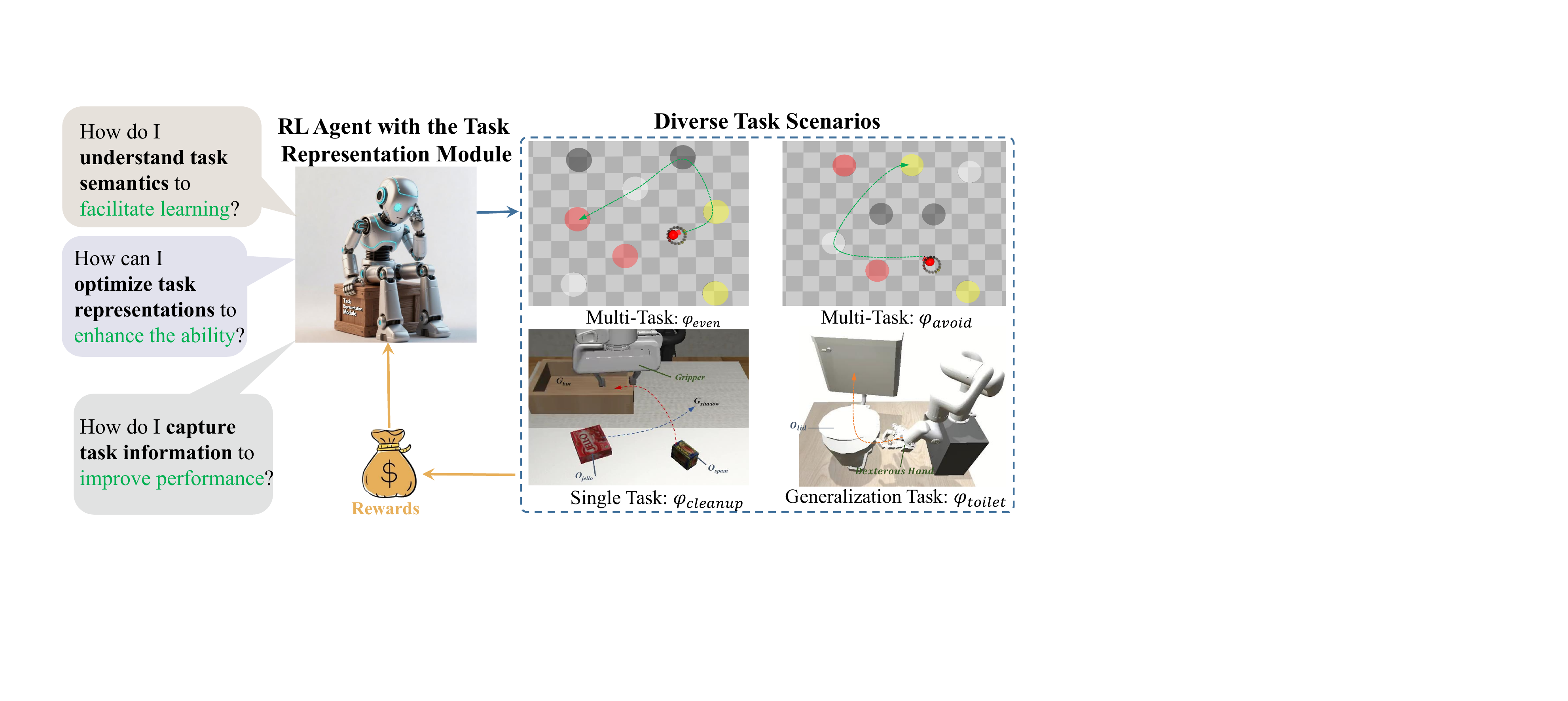}\caption{\textcolor{black}{\label{fig:Outline4Example} Example of a robot
completing different LTL tasks $\varphi_{even}$, $\varphi_{avoid}$,
$\varphi_{cleanup}$, and $\varphi_{toilet}$ across diverse scenarios.
The different colored trajectories present possible solutions for
the robot in completing the tasks.}}
\end{figure}

\begin{table}
\caption{\label{tab:task_in_ex}Task Scenarios and corresponding LTL formulas}

\centering{}\resizebox{0.45\textwidth}{!}{%%
\begin{tabular}{cc}
\hline 
Task Scenarios & \multicolumn{1}{c}{LTL Formulas}\tabularnewline
\hline 
Partially-Ordered Tasks & $\varphi_{even}=\lozenge(\mathrm{Yellow}\wedge\lozenge(\mathrm{Black}\wedge\lozenge(\mathrm{White}\wedge\lozenge\mathrm{Red})))$\tabularnewline
\hline 
Avoidance Tasks & \multicolumn{1}{c}{$\varphi_{avoid}=\varphi_{dang}\cup(\mathrm{White}\wedge(\varphi_{dang}\cup\mathrm{Yellow}))$}
 \tabularnewline
\hline 
Cleanup & $\varphi_{cleanup}=\lozenge(\varphi_{push\text{\_}jellobox}\wedge\lozenge\varphi_{pnp\text{\_}spamcan})$
\tabularnewline
\hline 
Toilet & $\varphi_{toilet}=\lozenge(\mathrm{lid\_approached}\wedge\lozenge(\mathrm{lid\_grasped}\wedge\lozenge\mathrm{lid\_opened}))$\tabularnewline
\hline 
\end{tabular}}
\end{table}

\section{Algorithm Design\label{sec:Algorithm Design}}

In this section, we present a novel task representation framework
for LTL formulas, namely LOTUS, that facilitates the agent's leaning
in diverse task scenarios and updates the task representation more
effectively and stably. We first describe the overview of LOTUS in
Sec. \ref{subsec:Overview} and then explain the technical details
in Sec. \ref{subsec:ltl2graph} to Sec. \ref{subsec:Theoretical_Guarantees}.

\subsection{Overview of LOTUS\label{subsec:Overview}}

In this work, we propose a task representation module that
not only guides the agent to complete tasks efficiently in both
single-task and multi-task scenarios, but also ensures stable representation performance and improves learning efficiency through direct updates.

\begin{figure*}
\centering{}\includegraphics[scale=0.32]{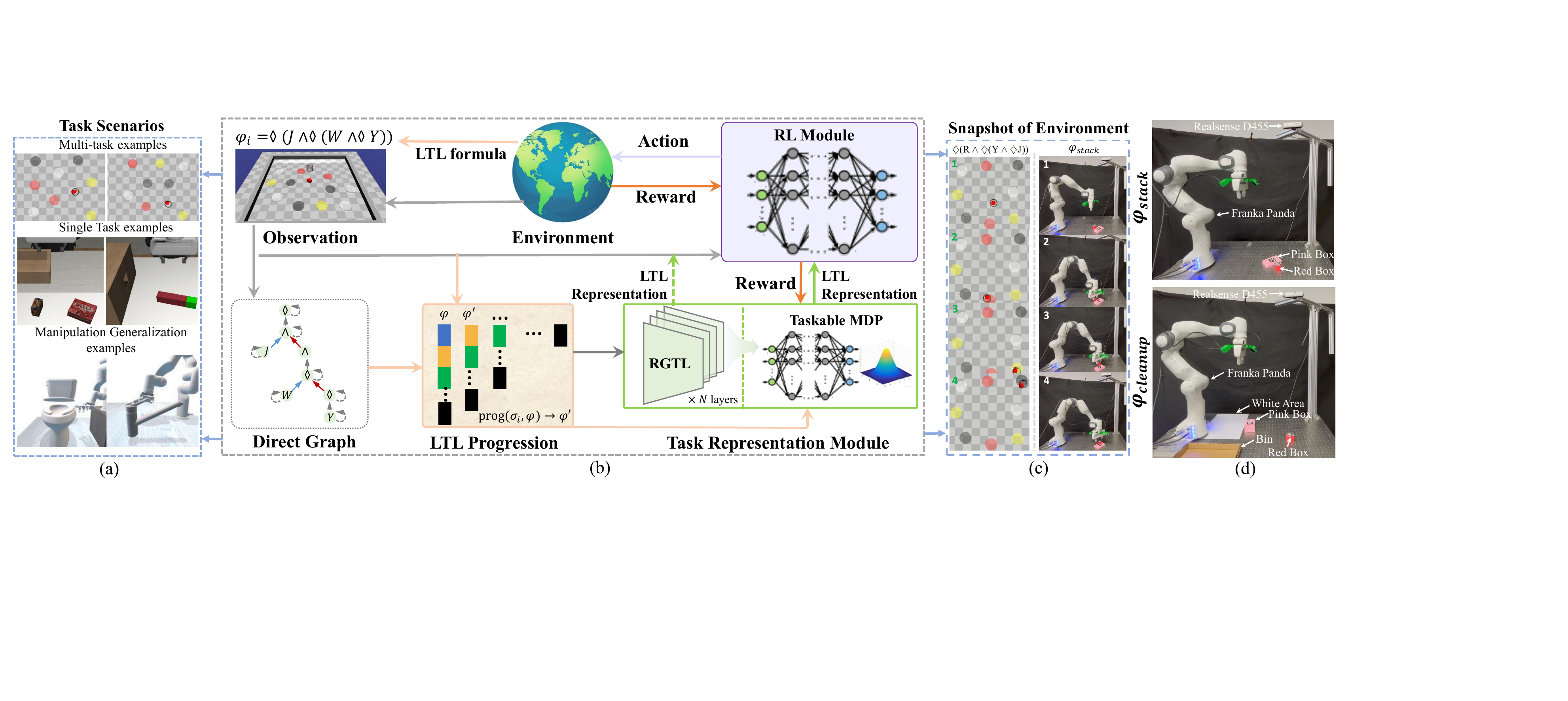}\caption{\textcolor{black}{\label{fig:framework2lotus} The outline of LOTUS.
(a) The task scenarios in our simulation for evaluation.
(b) The architecture of LOTUS. (c) The snapshots of
task execution. (d) The real-world experimental
platform constructed for $\varphi_{stack}$ and $\varphi_{cleanup}$.}}
\end{figure*}

As shown in Fig. \ref{fig:framework2lotus}, we represent the LTL formula as a graph structure to capture the dependencies among different sub-tasks, thereby enabling the agent to better interpret task semantics in both single-task and multi-task settings. Specifically, the proposed RGTN serves as a task representation module that: (1) converts the LTL formula into a directed graph to establish a solid foundation for representation, (2) leverages the relational design of GNNs to model the relevance of different edge types, and (3) employs multi-view attention to effectively integrate global information for generating task representations. Unlike previous methods \cite{vaezipoor2021ltl2action,zhang2023exploiting,yalcinkaya2024compositional}, 
which indirectly updates the task representation by back-propagating
the gradient of the RL controller, our method treats the task representation module as a policy that directly exploits rewards, while ensuring update stability via bisimulation metric.

\begin{algorithm}
\caption{\label{Alg1_LOTUS}LOTUS}

\scriptsize

\singlespacing

\begin{algorithmic}[1]

\Procedure {Input:} {The LTL instruction $\varphi$ and the corresponding
MDP $\mathcal{M}_{e}$}

{Output: } {\textcolor{red}{{} }An Optimized policy $\pi_{\mathcal{\varphi}}^{*}$
of LOTUS in TL-MDP $\mathcal{M}_{\varphi}$}

{Initialization: } {All neural network weights}

\While {episode not terminated}

\For {step $t=0,1,...,T$} \Comment{Exploration Phase}

\State $\varphi^{'}\leftarrow\mathrm{prog}(L(s),\varphi)$

\State Convert the LTL instruction $\varphi^{'}$ to the directed
graph $G_{\varphi^{'}}=\left(\mathcal{V},\mathcal{E},\mathcal{R}\right)$

\State Encode $G_{\varphi^{'}}=\left(\mathcal{V},\mathcal{E},\mathcal{R}\right)$
by (\ref{eq: RGTN}) and sample the corresponding LTL representation
$\hat{\psi}$ from (\ref{eq: sample_ltl_repre})

\State Augment the observation $s$ with $\hat{\psi}$

\State Gather data from $\varphi$ following the policy $\pi_{\varphi}$

\State Add the corresponding transition to the buffer

\EndFor

\For {training step $t=0,1,...,K$} \Comment{Training Phase}

\State Optimized the policy $\pi_{\varphi}$ by its own update way

\State Updated the LTL encoder by (\ref{eq:pi*_bs})

\EndFor

\EndWhile

\State Get Optimized neural network weights

\EndProcedure

\end{algorithmic}
\end{algorithm}

The pseudo-code of LOTUS is shown in Alg. \ref{Alg1_LOTUS}, which
includes an exploration phase (lines 3-10) and a training phase (lines
11-14). During exploration, the current LTL formula is progressed and converted into a directed graph (lines 4-5). The graph is then encoded to obtain an LTL representation (line 6), which is used to augment the observation, guide the agent's interaction with the environment, and store the resulting transitions in the buffer (lines 7-9). In the training phase, transitions are sampled not only to update the RL controller but also to optimize the LTL encoder (lines 12-13). This process repeats until the training budget is exhausted, after which LOTUS terminates and outputs the learned policy along with other network parameters.

In the following development, Sec. \ref{subsec:ltl2graph} describes how tasks specified by LTL instructions can be efficiently transformed into directed graphs, providing a foundation for extracting task semantics. Sec. \ref{subsec:RGTN} introduces the design of RGTN for generating effective LTL representations to guide agents across diverse scenarios. Sec. \ref{subsec:Update_of_LTL_Encoder} details the approach for efficiently and directly updating the LTL encoder based on environmental feedback. Sec. \ref{subsec:LBS2LTL_Encoder} explains how the bisimulation metric is employed to improve the stability of task representations. Finally, Sec. \ref{subsec:Theoretical_Guarantees} presents theoretical guarantees for the proposed task representation module, including behavioral equivalence, optimality fidelity, and trajectory robustness.

\subsection{Convert LTL Specification to Directed Graph\label{subsec:ltl2graph}}

\begin{figure}
\centering{}\includegraphics[scale=0.22]{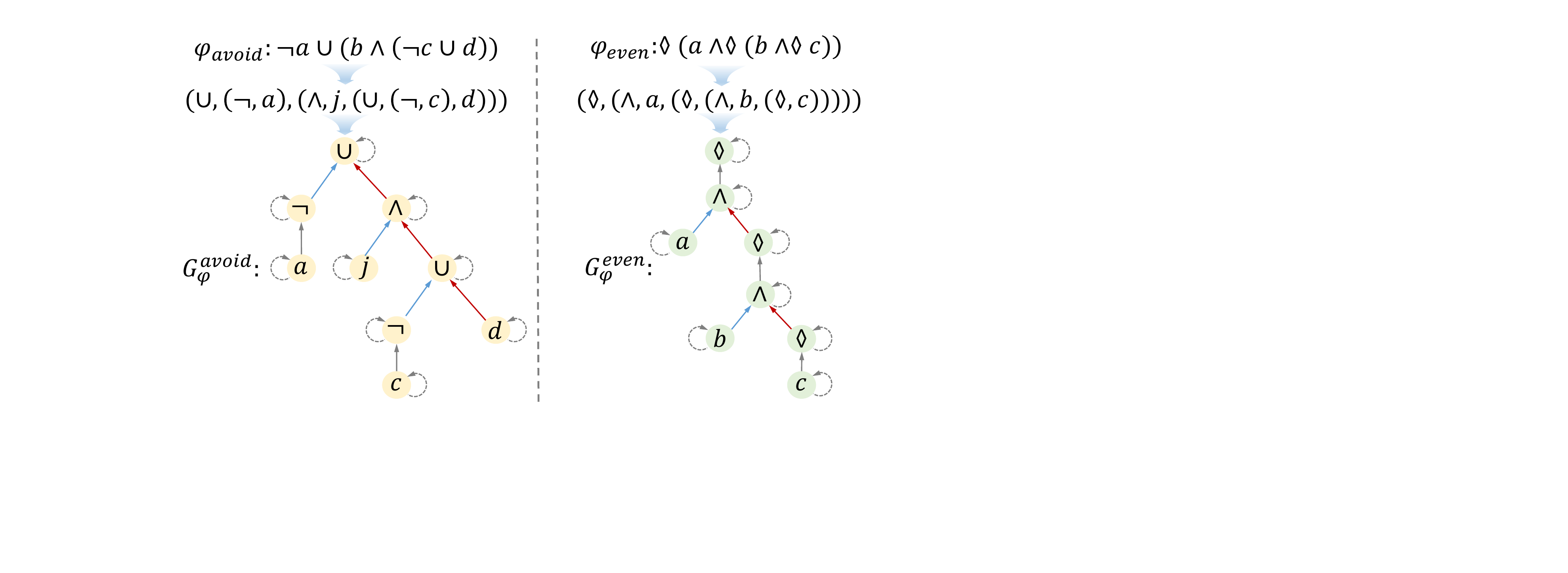}\caption{\textcolor{black}{\label{fig:example_ltl2graph} The examples about
how to build the directed graph from LTL specifications.}}
\end{figure}
There are several ways to model LTL formulas as a basis for guiding agents. For example, LTL formulas can be translated into various forms of automata that constrain robots to accomplish complex tasks in different scenarios \cite{Li2019,Cai2021b,Cai2021c}.
However, using sorted indices to represent automaton states is not well suited for improving RL performance. To address this challenge, LTL tasks are modeled as sequence vectors translated into learnable embedding inputs, a representation that integrates naturally with models such as RNNs and Transformers, and aligns with conventions in human language instruction. Nevertheless, sequence vector modeling is less effective at capturing complex information flows in multi-task settings \cite{vaezipoor2021ltl2action}. Therefore, we exploit the tree-structured nature of LTL formulas and parse them into graphs, which provide stronger inductive biases for generalization and more expressive representation capabilities \cite{si2018learning}.

To make the LTL formula more intuitive for the agent, we first represent
the LTL task $\varphi$ as a directed graph $G_{\varphi}=\left(\mathcal{V},\mathcal{E},\mathcal{R}\right)$,
where $\mathcal{V}$ represents the set of nodes (including the source node
$u$ and the target node $v$), and the type edges $\mathcal{E}$ consists
of a tuple $\left(v,r,u\right)$ where $r\in\mathcal{R}$ represents
edges of different types. In $G_{\varphi}$, the
progressed subtasks are connected to their parent operators via directed
edges, and self-loops are added to all nodes to capture information across different time steps. Specifically, $G_{\varphi}$
has four edge types: (1) \textit{Unary},
connecting child formulas to their parent nodes via a unary operator;
(2) \textit{Binary left}, connecting left
sub-formulas to their parent nodes via a binary operator; (3) \textit{Binary right}, connecting right sub-formulas similarly; (4) \textit{Self-loop}, ensuring that a node's representation can be reused over time. For instance, the tasks $\varphi_{avoid}=\lnot a\cup(b\wedge(\lnot c\cup d))$
and $\varphi_{even}=\diamondsuit(a\wedge\diamondsuit(b\wedge\diamondsuit c))$
can be represented as shown in Fig. \ref{fig:example_ltl2graph}.

\subsection{Encode LTL Representation by RGTN\label{subsec:RGTN}}

One of the main challenges in solving Problem \ref{Prob1} is designing a suitable parameterized encoder for LTL specifications that can effectively guide the agent and improve its performance in single-task, multi-task, and sparse reward scenarios. To address this, LTL2Action \cite{vaezipoor2021ltl2action} employs R-GCN as an LTL encoder for multi-task learning, enabling agents to learn task-conditional policies, but it hardly performs competitively in single-task settings. Conversely, T2TL \cite{zhang2023exploiting} leverages the Transformer to encode LTL representations in single-task situations to boost performance, yet the original Transformer design from \cite{vaswani2017attention} struggles in multi-task scenarios. To overcome these limitations, we propose Relational Graph Transformer Networks (RGTN), which integrate the architectural strengths of both Transformer and R-GCN, allowing agents to achieve better performance across diverse task scenarios.

\begin{figure}
\centering{}\includegraphics[scale=0.29]{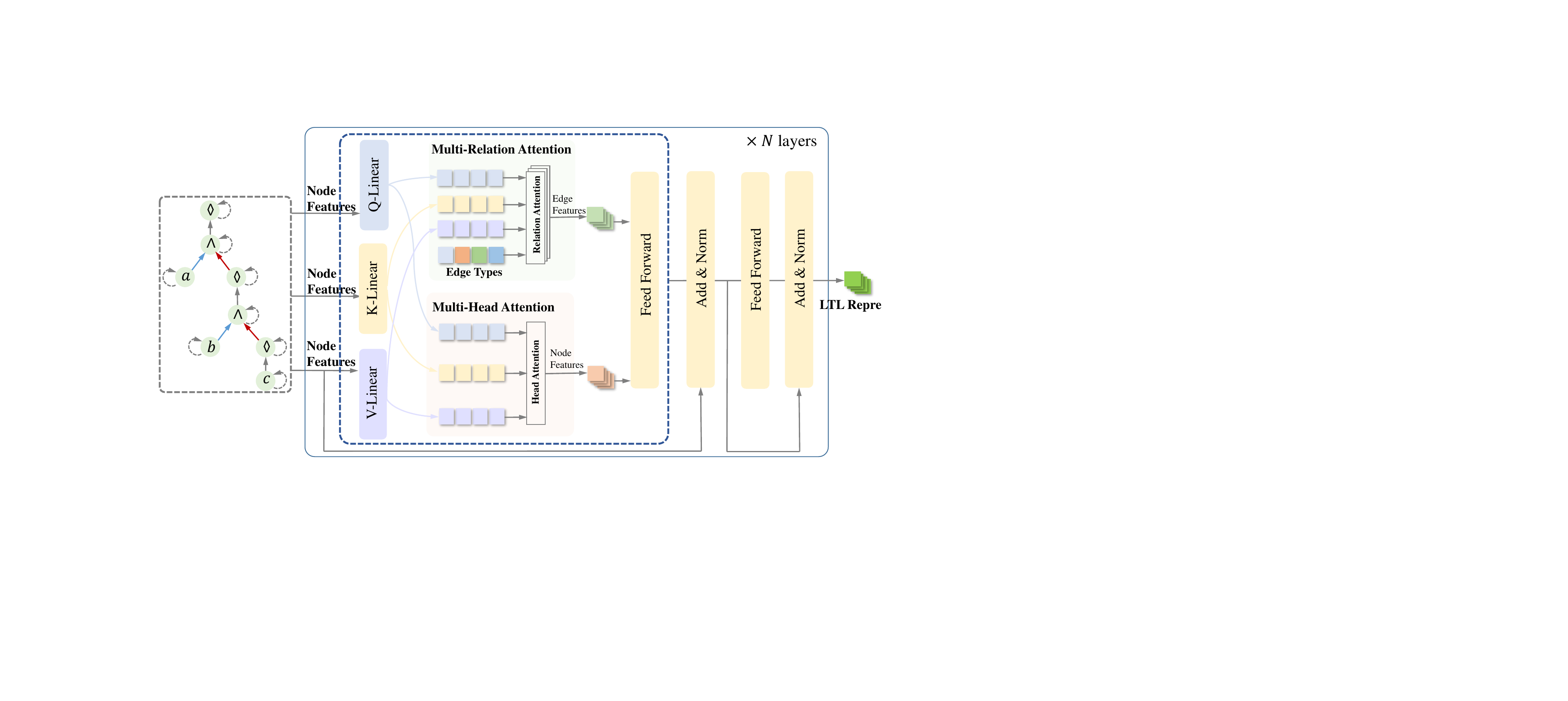}\caption{\textcolor{black}{\label{fig:outline2RGTN} The architecture of RGTN.}}
\end{figure}
As shown in Fig. \ref{fig:outline2RGTN}, the RGTN encoder is composed
of stacked Relational Graph Transformer layers (RGTLs), and each RGTL
is mainly constructed by relational attention sub-layers, head attention
sub-layers, and fully connected feed-forward layers. Layer norm (LN)
is applied to ensure the stability of gradient propagation before
applying each sub-layer and residual connections after each block.
In the structure of RGTN, the self-attention mechanism plays an important
role in capturing the task semantics from different views. Specifically,
given query $Q=W_{Q}h_{0}$, $K=W_{K}h_{0}$, and $V=W_{V}h_{0}$
originating from the directed graph $G_{\varphi}$ containing the
initial node information $h_{0}\in\mathbb{R}^{\left|V\right|\times d_{in}}$
where $W_{Q},W_{K},W_{V}\in\mathbb{R}^{d_{in}\times d_{in}}$ are
learnable projection matrices, we can design two different attention
methods to extract features from the relation edge types view and
global node information view.

1) Multi-Relation Attention (MRA): Given that different types of edges
in a heterogeneous graph contain different node features, we compute
the attention score for each edge relation as 
\[
F_{r}^{\varepsilon}=\mathrm{softmax}\left(\frac{Q_{r}K_{r}^{T}}{\sqrt{d_{k}}}\right)V_{r},
\]
where $Q_{r},K_{r},V_{r}\in\mathbb{R}^{\left|E_{r}\right|\times d_{in}}$ are grouped according to the edge type feature $e_{r}\in\left|E\right|$ and $\left|E_{r}\right|$
denotes the number of edges of relation type $r$. We then integrate the features from different relations to enable the agent to capture diverse task semantics more comprehensively as $m_{i}^{rel}=\underset{\varepsilon=0}{\mathrm{Concat}}F_{r}^{\varepsilon}\in\mathbb{R}^{\left|E\right|\times d_{in}}$.

2) Multi-head Attention (MHA): Following \cite{vaswani2017attention},
we leverage multi-head attention to fuse node features
from different heads, enabling the agent to capture more global context by
$m_{i}^{head}=W_{h}\cdot\underset{\mathfrak{h}=0}{\mathrm{Concat}}F_{h}^{\mathfrak{h}}\in R^{\left|E\right|\times d_{in}}$,
where $F_{h}^{\mathfrak{h}}=\mathrm{softmax}\left(\frac{QK^{T}}{\sqrt{d_{k}}}\right)V.$

(3) Information Fusion: Unlike \cite{vaswani2017attention}, we introduce an additional attention fusion projection  $W_{\mathrm{cat}}$, which further integrates the two attention scores and projects them into a higher-dimensional space by $\tilde{h}_{i}^{l}=W_{\mathrm{cat}}\left[m_{i}^{rel}\parallel m_{i}^{head}\right]$.
Therefore, the main computational procedure of RGTL can be written
as 
\[
h^{l+1}=\sigma\left(W_{0}^{l}h_{i}^{l}+\underset{j\in N(i)}{\sum}\tilde{h}_{j}^{l}\right).
\]
The global computational procedure of RGTN can be summarized as
\begin{equation}
\begin{array}{cc}
h^{1}=\sigma\left(W_{0}^{l}h_{i}^{l}+\underset{j\in N(i)}{\sum}\tilde{h}_{j}^{l}\right), & l=0\\
h_{l}^{'}=\text{LN}(h^{l}))+h^{l}, & l=1,...,L\\
\hat{h}_{l}=\text{MLP}(\text{LN}(h_{l}^{'}))+h_{l}^{'}, & l=1,...,L\\
\hat{\psi}=\text{LN}(\hat{h}_{l}),
\end{array}\label{eq: RGTN}
\end{equation}
where $\hat{\psi}$ stands for the output of the last layer from RGTN,
which can be manually customized to an appropriate dimension according
to the need of tasks.

By designing RGTN, we leverage Multi-Head Attention and residual connections to enable the agent to effectively capture sub-tasks represented by different nodes for improved performance in single-task scenarios. At the same time, the Multi-Relation Attention and directed graph architecture allow the agent to exploit structural information embedded in different edge types, thereby enhancing training efficiency in multi-task settings. However, similar to \cite{vaezipoor2021ltl2action,zhang2023exploiting,yalcinkaya2024compositional}, the update of RGTN depends on the backpropagation of gradients from the RL controller, which slows the convergence of the LTL encoder. In turn, the instability of the LTL representation negatively impacts the stability of the RL controller during training.

\subsection{The Direct Update of LTL Encoder\label{subsec:Update_of_LTL_Encoder}}

Although the design of RGTN can improve the training efficiency in multi-task scenarios, developing a method to directly update the LTL encoder, as stated in Problem \ref{Prob1}, remains an open challenge. Similar to the LTL encoder, the visual encoder may also train unstably and is prone to local optima when its weights are updated only indirectly through backpropagation from the RL controller \cite{levine2016end}. However, unlike the visual encoder, which can be trained independently of the RL module via supervised learning with human-labeled data \cite{bao2023dexart}, there is no readily available ground-truth LTL representation to enable supervised training of the LTL encoder.

To address the above challenge, we propose to model
the LTL encoder as a stochastic
policy inspired by \cite{zhang2024exploiting}. This allows task representations to be sampled directly from the policy and optimized using the policy gradient method. We refer to this approach as TLPG, which consists of two key
steps. In the first step, we model the interaction with LTL Progression
at the task level as a taskable MDP.

\begin{defn}
\label{def:Taskable_MDP}(Taskable MDP) A taskable MDP is defined
as a tuple $\mathcal{M}_{\Phi}=\left(\Psi,\Gamma,\varXi,p_{\Phi},\Pi,\tilde{R}_{\varphi},\gamma_{\Phi},\mu_{\Phi}\right)$,
where $\Psi$ is the state space containing the LTL formulas, $\Gamma\subseteq\Psi$
is the set of terminal LTL formulas, $\varXi$ is the action space
composed by LTL representations, $\Pi$ is a set of atomic propositions,
$p_{\Phi}(\varphi^{'}|\varphi,\sigma)=\mathrm{prog}(\sigma,\varphi)$
is the transition function from $\varphi\in\Psi$ to $\varphi^{'}\in\Psi$
under the proposition $\sigma\in\Pi$, $\tilde{R}_{\varphi}$ is the
reward function in Def. \ref{Def:TL-MDP}, $\gamma_{\Phi}\in(0,1]$
is the discount factor, and $\mu_{\Phi}$ is the initial LTL formula
distribution.
\end{defn}

Def. \ref{def:Taskable_MDP} provides several advantages.
First, the LTL encoder is treated as a policy in Taskable MDP,
interacting at the task level and producing the LTL representation $\hat{\psi}$. This representation guides the interactions between the RL agent and the environment, yielding corresponding propositions via $\sigma=L(s)$. Consequently,
task-level rewards are obtained whenever the LTL specification progresses.
More concretely, we can model the RGTN as a stochastic strategy with
a Gaussian distribution 
\begin{equation}
\pi_{\zeta}^{\varTheta}=\mathcal{N}(\hat{\psi},\varrho),\label{eq: sample_ltl_repre}
\end{equation}
where $\zeta$ denotes the policy weights, $\hat{\psi}$ is the LTL representation encoded by RGTN, and $\varrho$ is a learnable variance parameter.
Thus a new LTL representation $\hat{\psi}_{\pi}$ can be sampled from
$\pi_{\zeta}^{\varTheta}$. Second, by modeling the LTL encoder
as a stochastic policy in Taskable MDP, we can optimize the policy weights using an on-policy method.
Therefore, the second key step in TLPG is to directly optimize
the LTL encoder by designing the following loss function as
\begin{equation}
J_{\pi^{\varTheta}}\left(\zeta\right)=\underset{\tau\sim\pi_{\zeta}^{\varTheta}}{E}\left[-\log\pi_{\zeta}^{\varTheta}\cdot\tilde{R}_{\varphi}\right].\label{eq: TLPG}
\end{equation}
It is worth noting that we use reward $\tilde{R}_{\varphi}$ in (\ref{eq: TLPG})
because it can incorporate not only the task reward $r_{\varphi}$ in sparse reward settings, but also the environmental reward $r_{env}$ in
dense reward settings, thereby accelerating the optimization of LTL encoder. For instance, the dense rewards provided by DexArt in Sec. \ref{sec:EXPERIMENTS}
can enable agents to achieve higher success rates.
Thus when the task representation module is updated, it leverages both the task-level reward $r_{\varphi}$ and the environment-level reward
$r_{env}$ to continually refine the LTL representation.

\subsection{Optimizing LTL Encoder by Bisimulation Metric\label{subsec:LBS2LTL_Encoder}}

In Sec. \ref{subsec:Update_of_LTL_Encoder}, we present a direct update for the LTL encoder by designing a stochastic strategy using environmental feedback. However, the challenge of further improving the robustness and stability of the representation quality in Problem \ref{Prob1} remains unresolved. \textcolor{black}{Potential drawbacks include: 1) Compared
to RGTN, TLPG samples LTL representations $\hat{\psi}_{\pi}$ from
the stochastic policy $\pi_{\zeta}^{\varTheta}$; sampling noise may
introduce representation bias, thereby impairing the agent's performance.
2) In dense reward environments (e.g., manipulation tasks), the sampled
representations are prone to reward fluctuations, compromising representational
stability.} To address this challenge, inspired by \cite{ferns2011bisimulation,castro2020scalable,zhang2020learning},
we utilize the bisimulation metric introduced in Sec. \ref{subsec:BS_in_RL} to enhance stability and robustness during updates.
Specifically, we introduce three key design steps to strengthen TLPG through the bisimulation metric.

Similar to Def. \ref{def:bs_in_rl}, we define the bisimulation
metric within Taskable MDP to evaluate the similarity between two LTL formulas $\phi_{i}$ and $\phi_{j}$ as
\begin{equation}
\begin{array}{cc}
d\left(\phi_{i},\phi_{j}\right)= & \underset{\sigma\in\Pi}{\max}\left(1-c\right)\left|\overline{R}\left(\phi_{i},\sigma\right)-\overline{R}\left(\phi_{j},\sigma\right)\right|\\
 & +cW_{1}\left(p_{\Phi}\left(\cdot\mid\phi_{i},\sigma\right),p_{\Phi}\left(\cdot\mid\phi_{j},\sigma\right)\right),
\end{array}\label{eq:bs_2011}
\end{equation}
where $c\in(0,1)$ is a weighting coefficient that balances the contributions of reward differences and transition distribution differences. A smaller value of $d\left(\phi_{i},\phi_{j}\right)$ indicates greater similarity between $\phi_{i}$ and $\phi_{j}$. However, computing the max operator in (\ref{eq:bs_2011}) is challenging
in high-dimensional spaces. 

Inspired by \cite{castro2020scalable},
we simplify the computation of the bisimulation metric in
(\ref{eq:bs_2011}) by considering the policy $\pi_{\zeta}^{\varTheta}$ introduced in Sec. \ref{subsec:Update_of_LTL_Encoder}:
\begin{equation}
d\left(\phi_{i},\phi_{j}\right)=\left|r_{\phi_{i}}^{\pi}-r_{\phi_{j}}^{\pi}\right|+\gamma W_{1}\left(P^{\pi}\left(\cdot\mid\phi_{i}\right),P^{\pi}\left(\cdot\mid\phi_{j}\right)\right)\label{eq:on_policy_bs_in_ltl}
\end{equation}
where $r_{\phi}^{\pi}=\underset{\sigma}{\sum}\pi_{\zeta}^{\varTheta}\left(\hat{\psi}_{\pi}|\phi\right)\overline{R}(\phi,\sigma)$
and $P^{\pi}\left(\cdot\mid\phi\right)=\underset{\sigma}{\sum}\pi_{\zeta}^{\varTheta}\left(\hat{\psi}_{\pi}|\phi\right)\underset{\phi^{'}\in\varPhi}{\sum}p_{\Phi}\left(\phi^{'}|\phi,\sigma\right)$.
Although the metric in (\ref{eq:on_policy_bs_in_ltl}) effectively measures the similarity between $\phi_{i}$
and $\phi_{j}$, our ultimate goal is to obtain more stable representations through the encoder output towards desired realation $d\left(\phi_{i},\phi_{j}\right)$. To this end, we enforce a comparable scale in the latent space and further optimize the LTL encoder via
\begin{equation}
\begin{array}{cc}
J\left(\varTheta\right)= & \left(\left\Vert \hat{\psi}_{\pi}^{i}-\hat{\psi}_{\pi}^{j}\right\Vert _{1}-\left|r_{\phi_{i}}^{\pi}-r_{\phi_{j}}^{\pi}\right|-\right.\\
 & \left.\gamma W_{2}\left(\hat{P}\left(\cdot\mid\hat{\psi}_{\pi}^{i},a_{i}\right),\hat{P}\left(\cdot\mid\hat{\psi}_{\pi}^{j},a_{j}\right)\right)\right)^{2}
\end{array}\label{eq:pi*_bs}
\end{equation}
where $\hat{\psi}_{\pi}^{i}$ and $\hat{\psi}_{\pi}^{j}$ are sampled
from $\pi_{\zeta}^{\varTheta}$, $\hat{P}$ denotes the latent dynamics model parameterized as a Gaussian distribution, and the 2-Wasserstein distance $W_{2}$ is adopted due to its closed-form expression for Gaussian distribution. 

\textcolor{black}{Therefore, by Alg. \ref{Alg1_LOTUS},
the algorithmic flow of LOTUS can be summarized as follows. At the
task initiation stage, the agent receives an LTL task $\varphi$,
which is first converted into a directed graph $G_{\varphi}$ to provide
stronger inductive bias. Subsequently, the directed graph $G_{\varphi}$
is input to the LTL encoder RGTN via (2) to extract task semantics
$\hat{\psi}$, thereby guiding the agent-environment interaction.
As the agent continuously explores and exploits, sub-goals are progressively
satisfied. Their corresponding atomic propositions $\sigma_{i}$ are
then input at the task level via $\mathrm{prog}(\sigma_{i},\varphi)$
to gradually shorten the LTL task, until the task is fulfilled, fails,
or exceeds the specified number of steps. \textcolor{black}{To enable direct and stable updates
to RGTN using rewards, we treat the state transitions achieved at
the task level via $\mathrm{prog}$ as a Taskable MDP. Within this
MDP, we model RGTN as a stochastic policy, using its sampled LTL representations
as actions to guide the agent in achieving state transitions within
the TL-MDP, and drives state transitions at the task level (e.g.,
task progression) simultaneously. Specifically, the sc-LTL task samples
the corresponding representation $\hat{\psi}$ via (2) and (3), then
concatenates the representation and observations together to serve
as input for the downstream RL agent, guiding its actions and environmental
sampling to obtain the corresponding reward value and the next-time-step
state. We then use a labeling function to map this environmental state
to corresponding atomic propositions and utilize LTL Progression to
achieve state transitions at the task level within the Taskble MDP.
This enables both task-level progression and environmental-level interaction.
However, as mentioned earlier, directly using task representations
sampled from a stochastic policy to guide the agent introduces instability
in updates from multiple perspectives. Therefore, we employ a bisimulation
metric approach to further constrain the task representations encoded
by the LTL module. This aims to enhance the stability and robustness
of task representations by quantifying behavioral similarity between
task formulas, thereby improving the agent\textquoteright s training
performance and generalization capabilities in downstream environments. The comparison of different
update methods for the LTL encoder is shown in Fig. \ref{fig:Update_TLPG}.}}

\textcolor{black}{Since our task representation framework is applicable
to most RL algorithms, suppose the time complexity of the RL forward
pass is $O(T\cdot C_{RL}^{for})$ and the time complexity during updates
is $O(K\cdot C_{RL}^{up})$.} \textcolor{black}{The overall time complexity
of Alg. \ref{Alg1_LOTUS} is $O\left(TL\left(E|d^{2}+H|V|^{2}d\right)+K|B|d^{2}+TC_{RL}^{for}+KC_{RL}^{up}\right)$}.\textcolor{black}{{}
The computational complexity is dominated by the self-attention mechanism
in the RGTN, which scales with a time complexity of $O\left(T\cdot|V|\text{\ensuremath{\cdot}}d^{2}\right)$.
The space complexity is mainly composed of LTL graph storage, LTL
encoder feature storage, and replay buffer occupancy, and can be summarized
as $O\left(L|V|d+|D|d\right)$}.

\begin{rem}
\textcolor{black}{The scope of LOTUS\textquoteright s universality
generally encompasses long-horizon tasks described by sc-LTL, as well
as the corresponding MDP comprising the task environment and the agent,
which require the policy to be differentiable. It is worth noting
that, since LOTUS is dedicated to encoding task semantics at the task
level to guide the agent in the environment, there are no restrictions
on whether the environment in which the downstream agents play is
discrete or continuous, nor on whether the RL algorithms driving the
downstream agents are on-policy or off-policy.}
\end{rem}

\subsection{Theoretical Guarantees of LTL Encoder\label{subsec:Theoretical_Guarantees}}

In this section, we show the theoretical guarantees of LTL representations
with bisimulation metric, including behavioral equivalence, optimality
fidelity, and trajectory robustness. 

To facilitate the following analysis, we use $P_{\phi_{\boxempty}}^{\pi}$
to denote the corresponding $P^{\pi}\left(\cdot\mid\phi_{i}\right)$
(e.g., $\boxempty$ can take $i$, $j$), leverage $V_{\boxempty}^{*}$
to denote the corresponding $V^{*}(\cdot)$ with different input (e.g.,
$\boxempty$ can take $\phi$), and leverage $R_{\diamond}^{\boxempty}$
to denote the corresponding $R(\cdot,\cdot)$ with different state-action
pairs (e.g., $\boxempty$ can take $s$ and $\diamond$ can take $a$).
\begin{thm}
\label{thm:BE_bs}(Behavioral Equivalence for $\mathrm{LOTUS}$) If a policy $\pi$ in $\mathrm{LOTUS}$ improves continuously over time and converges to the optimal policy $\pi^{*}$, the following bisimulation metric between two LTL formula $\phi_{i}$ and $\phi_{j}$ in task state space
\[
\begin{array}{cc}
d\left(\phi_{i},\phi_{j}\right)= & \left(1-c\right)\left|r_{\phi_{i}}^{\pi}-r_{\phi_{j}}^{\pi}\right|+c\left(\inf_{\gamma^{'}\in\varGamma\left(P_{i},P_{j}\right)}\right.\\
 & \left.\int_{\varphi\times\varphi}d\left(\phi_{i},\phi_{j}\right)^{p}d\gamma^{'}\left(\phi_{i},\phi_{j}\right)\right)^{1/p}
\end{array}
\]
has a least fixed point
which is a $\pi^{*}$-bisimulation metric, where $\varGamma\left(P_{i},P_{j}\right)$ is the set of all couplings
of $P_{i}$ and $P_{j}$.
\end{thm}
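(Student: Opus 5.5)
The plan is to read the statement as a fixed-point statement for an operator on pseudometrics over the task state space $\Psi$, following the on-policy bisimulation results of \cite{castro2020scalable} and \cite{ferns2011bisimulation}. Let $\mathfrak{M}$ be the set of bounded pseudometrics on $\Psi$, ordered pointwise ($d\preceq d'$ iff $d(\phi_i,\phi_j)\le d'(\phi_i,\phi_j)$ for all pairs). For a fixed policy $\pi$ in the Taskable MDP of Def.~\ref{def:Taskable_MDP}, define
\[
\mathcal{F}^{\pi}(d)(\phi_i,\phi_j)=(1-c)\left|r_{\phi_i}^{\pi}-r_{\phi_j}^{\pi}\right|+c\,W_p^{d}\left(P_{\phi_i}^{\pi},P_{\phi_j}^{\pi}\right),
\]
where $W_p^{d}$ is the $p$-Wasserstein distance with ground cost $d$, i.e.\ the infimum over couplings in $\varGamma(P_i,P_j)$ appearing in the statement. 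The metric in the theorem is then exactly a fixed point $d=\mathcal{F}^{\pi}(d)$. The rewards $\overline{R}$ are bounded, since the task reward $r_{\varphi}$ is a fixed bonus, so after normalising we may take $\overline{R}\in[0,1]$ as in Def.~\ref{def:bs_in_rl}. This makes $\mathfrak{M}$ closed under $\mathcal{F}^{\pi}$, with all distances bounded by $1$.

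The key steps are as follows.

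\textbf{Step 1: $\mathcal{F}^{\pi}(d)$ is a pseudometric.} Symmetry and $\mathcal{F}^{\pi}(d)(\phi,\phi)=0$ are immediate. The triangle inequality follows because $|r_{\phi_i}^{\pi}-r_{\phi_j}^{\pi}|$ is a pseudometric and $W_p^{d}$ is a pseudometric on distributions whenever $d$ is; the latter is the gluing lemma in \cite{villani2008optimal}.

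\textbf{Step 2: monotonicity.} If $d\preceq d'$, then for every coupling the integral of $d^p$ is at most the integral of $d'^p$, so $W_p^{d}\le W_p^{d'}$ and hence $\mathcal{F}^{\pi}(d)\preceq\mathcal{F}^{\pi}(d')$. The set $(\mathfrak{M},\preceq)$ is a complete lattice (arbitrary sups of bounded pseudometrics are pseudometrics), so Knaster--Tarski gives a least fixed point $d^{\pi}_{\sim}$.

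\textbf{Step 3: uniqueness and computability.} For $p=1$, Kantorovich duality shows $W_1^{d}$ is $1$-Lipschitz in $d$ under the sup norm, so $\mathcal{F}^{\pi}$ is a $c$-contraction with $c<1$. Banach's theorem then makes the fixed point unique, hence also the least one, and the limit of the iteration $\mathcal{F}^{\pi,n}(0)$. For general $p$ I would instead rely on the Knaster--Tarski argument, obtaining the least fixed point as $\sup_n \mathcal{F}^{\pi,n}(0)$ via Kleene iteration. This requires continuity of $W_p$ along increasing sequences, which holds here because the progression closure $\mathrm{cl}(\varphi)$ is finite and the transition supports are therefore finite.

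\textbf{Step 4: passage to $\pi^{*}$.} Under the hypothesis $\pi\to\pi^{*}$, the quantities $r^{\pi}_{\phi}$ and $P^{\pi}_{\phi}$ are finite mixtures weighted by $\pi$, so they converge to $r^{\pi^*}_{\phi}$ and $P^{\pi^*}_{\phi}$. Combining the contraction estimate with continuity in these data gives
\[
\|d^{\pi}_{\sim}-d^{\pi^*}_{\sim}\|_\infty\le \frac{1}{1-c}\,\|\mathcal{F}^{\pi}(d^{\pi^*}_{\sim})-\mathcal{F}^{\pi^*}(d^{\pi^*}_{\sim})\|_\infty\to 0 .
\]
So the limiting least fixed point is the fixed point of $\mathcal{F}^{\pi^{*}}$. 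It is a $\pi^{*}$-bisimulation metric in the sense of \cite{castro2020scalable}: its kernel is the largest relation $E$ such that $\phi_iE\phi_j$ implies $r^{\pi^*}_{\phi_i}=r^{\pi^*}_{\phi_j}$ and $P^{\pi^*}_{\phi_i}(C)=P^{\pi^*}_{\phi_j}(C)$ for every $E$-class $C$. I would show this by verifying that the zero set of the fixed point is a $\pi^*$-bisimulation, and conversely that any $\pi^*$-bisimulation has zero distance, by induction along the iterates from $0$.

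I expect Step 4 to be the main obstacle: making the informal hypothesis ``improves continuously and converges to $\pi^{*}$'' precise enough to pass limits. I would first try assuming pointwise convergence of $\pi$ on the finite set $\mathrm{cl}(\varphi)$ and using the contraction bound above. If the case $p>1$ breaks the Lipschitz estimate, I would restrict the uniqueness and convergence claims to $p=1$ and keep only existence of the least fixed point, via Step 2, for general $p$.
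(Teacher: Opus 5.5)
Your proposal is correct and follows essentially the route the paper relies on. The paper omits the proof as an extension of Thm.~1 of \cite{zhang2020learning}, which likewise shows the on-policy operator is a $c$-contraction on bounded pseudometrics, applies Banach's fixed point theorem, passes to the limit $\pi\to\pi^{*}$, and takes the $\pi^{*}$-bisimulation property from the fixed-policy analysis of \cite{castro2020scalable}. Your Knaster--Tarski detour for $p>1$ is unnecessary: bounding each coupling directly and applying Minkowski's inequality in $L^{p}(\gamma')$ gives $\bigl|W_p^{d}-W_p^{d'}\bigr|\le\|d-d'\|_\infty$ for every $p\ge1$, so the contraction, uniqueness, and your Step~4 estimate hold for all $p$.
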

The proof of Thm. \ref{thm:BE_bs} is omitted, as it is a straightforward extension of the Thm. 1 in \cite{zhang2020learning}.
Under this $\pi^{*}$-bisimulation metric, we can divide the latent
space into $n$ partitions based on some $\epsilon>0$, where $\frac{1}{n}<(1-c)\epsilon$.
With these notations, the following value functions can be bounded based
on bisimulation metrics.
\begin{thm}
\label{thm:OF_bs}(Optimality Fidelity for $\mathrm{LOTUS}$) Consider
a taskable MDP $\mathcal{\overline{M}}_{\Phi}$, which is formed
by clustering states within an $\epsilon$-neighborhood, along with
an encoder $\varTheta$ that maps states from the original MDP $\mathcal{M}_{\Phi}$
to these clusters. Under the same assumption in Thm. \ref{thm:BE_bs},
the optimal value functions for the two MDPs are bounded by
\[
\left|V_{\phi}^{*}-V_{\varTheta\left(\phi\right)}^{*}\right|\leq\frac{2\epsilon+2\mathcal{L}}{\left(1-\gamma\right)\left(1-c\right)}
\]
where $\mathcal{L}=\sup\left|\left\Vert \varTheta\left(\phi_{i}\right)-\varTheta\left(\phi_{j}\right)\right\Vert -d\left(\phi_{i},\phi_{j}\right)\right|$
is the learning error for LTL encoder $\varTheta$. 
\end{thm}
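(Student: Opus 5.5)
The plan is to adapt the standard proof of value-function bounds under bisimulation aggregation (Ferns et al.; Theorem 2 in \cite{zhang2020learning}) to the Taskable MDP $\mathcal{M}_{\Phi}$, and to add the encoder learning error $\mathcal{L}$ through a triangle inequality. I will treat the aggregated MDP $\mathcal{\overline{M}}_{\Phi}$ as follows. Each cluster $\varTheta(\phi)$ has reward and transition kernel given by averages, under some weighting $\xi$ over its members, of the rewards and (pushed-forward) transitions of those members. The argument compares $V^{*}$ on $\mathcal{M}_{\Phi}$ with $V^{*}$ on $\mathcal{\overline{M}}_{\Phi}$ lifted back to formulas.

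\textbf{Step 1: clusters in latent space versus clusters under $d$.} Two formulas $\phi_i,\phi_j$ in the same cluster satisfy $\|\varTheta(\phi_i)-\varTheta(\phi_j)\|\le\epsilon$, since clusters are $\epsilon$-neighborhoods in the latent space. The definition of $\mathcal{L}$ then gives
\[
d(\phi_i,\phi_j)\le\|\varTheta(\phi_i)-\varTheta(\phi_j)\|+\mathcal{L}\le\epsilon+\mathcal{L}.
\]
This is the only place the learning error enters, and it turns a latent-space radius into a bisimulation-metric radius.

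\textbf{Step 2: Lipschitz property of $V^*$.} By Theorem \ref{thm:BE_bs}, the fixed point $d$ is a $\pi^{*}$-bisimulation metric. By the standard result $(1-c)|V^{*}_{\phi_i}-V^{*}_{\phi_j}|\le d(\phi_i,\phi_j)$, proved by induction on value iteration using the Kantorovich dual of $W_1$, $V^{*}$ is $\frac{1}{1-c}$-Lipschitz with respect to $d$. Combined with Step 1, $V^{*}$ varies by at most $(\epsilon+\mathcal{L})/(1-c)$ within a cluster.

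\textbf{Step 3: contraction estimate.} Let
\[
e=\sup_{\phi}\,\bigl|V^{*}_{\phi}-V^{*}_{\varTheta(\phi)}\bigr|.
\]
I write both values via their Bellman optimality equations and bound the two differences separately.
\begin{itemize}
\item The reward difference $|\overline{R}(\phi,\sigma)-\sum_{\phi'\in\varTheta(\phi)}\xi(\phi')\overline{R}(\phi',\sigma)|$ is at most $\max_{\phi'}d(\phi,\phi')/(1-c)$.
\item The transition difference splits into two parts. The first is $\gamma e$, from replacing $V^{*}_{\varTheta(\cdot)}$ by $V^{*}$. The second is $\gamma$ times the gap between expected $V^{*}$ under transitions from $\phi$ and from its cluster-mates. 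That gap is controlled by $W_1$, because $V^{*}$ is Lipschitz (Step 2), and hence by $d/c$ type terms.
\end{itemize}
Together these give
\[
e\le\frac{2(\epsilon+\mathcal{L})}{1-c}+\gamma e.
\]
The factor $2$ comes from bounding the deviation of both $\phi$ and the cluster representative by the cluster diameter, through the triangle inequality across the cluster. Solving yields the claimed $\frac{2\epsilon+2\mathcal{L}}{(1-\gamma)(1-c)}$.

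\textbf{Main obstacle.} Step 3 is where I expect difficulty, specifically in making the constants come out as exactly $2/(1-c)$ rather than a mix of $1/(1-c)$ and $1/c$. The hypothesis $\frac{1}{n}<(1-c)\epsilon$ on the partition size suggests this is handled as in \cite{zhang2020learning}: the reward and transition terms are merged into the single quantity $d$, so that both are absorbed together using $d\ge(1-c)|\Delta r|+cW_1$ and the Lipschitz bound. I would follow that bookkeeping, and treat $\sup_{\phi}$ over the finite progression closure so that all suprema are attained.
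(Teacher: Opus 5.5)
\textbf{Verdict: correct, but by a different route from the paper's, and your Steps 2--3 need two hypotheses that the paper also leaves implicit.}

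\textbf{How the two proofs differ.} The paper derives nothing itself. It quotes the aggregation bound of Ferns et al., $(1-c)|V^{*}_{\phi}-V^{*}_{\varTheta(\phi)}|<g(\phi,d)+\frac{\gamma}{1-\gamma}\max_{\rho}g(\rho,d)$. It reads the $\epsilon$-neighborhoods as balls in $d$ itself, so that $g\le 2\epsilon$, and obtains $\frac{2\epsilon}{(1-\gamma)(1-c)}$. It then adds the nonnegative term $2\mathcal{L}$ as padding, so $\mathcal{L}$ plays no role in its argument. You instead put the clusters in latent space and use the definition of $\mathcal{L}$ to convert a latent radius into a $d$-radius (Step 1). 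That is the only reading under which $\mathcal{L}$ carries content. You also unpack the cited theorem into its two ingredients: Lipschitz continuity of $V^{*}$ in $d$, and a one-step Bellman contraction. The paper's version is a one-line citation; yours is self-contained and shows where the encoder error actually enters.

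\textbf{Two hypotheses your Steps 2--3 need.} The paper inherits both silently through its citation. They, not the partition condition $\frac{1}{n}<(1-c)\epsilon$, are what resolve your main obstacle.

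First, the bound $(1-c)|V^{*}_{\phi_i}-V^{*}_{\phi_j}|\le d(\phi_i,\phi_j)$ requires $\gamma\le c$. The induction step produces $(1-c)|\Delta\overline{R}_{\sigma}|+\gamma W_1$, which is dominated by $d$ only when $\gamma\le c$. For example, two absorbing states with per-step rewards $1$ and $0$ have $d=1$ but value gap $1/(1-\gamma)$. With $\gamma\le c$ the reward and transition terms merge and no $1/c$ appears. The one-step error is at most $\frac{1}{1-c}\sum_{\phi'\in C}\xi(\phi')d(\phi,\phi')+\gamma e$. Hence $e\le\frac{\max_{C}\mathrm{diam}_d(C)}{(1-\gamma)(1-c)}\le\frac{2\epsilon+\mathcal{L}}{(1-\gamma)(1-c)}$ for latent balls of radius $\epsilon$, which implies the claim. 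Your separate bookkeeping also works, via $\gamma/c\le 1$, but it gives exactly the stated constant only if $\epsilon$ is the latent diameter rather than the radius.

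Second, the first bullet of Step 3 bounds $|\overline{R}(\phi,\sigma)-\overline{R}(\phi',\sigma)|$ for every $\sigma$. The on-policy $\pi^{*}$-metric of Theorem~\ref{thm:BE_bs} does not control this. You need the action-maximized metric (\ref{eq:bs_2011}), which is the one Ferns et al.'s theorem concerns. For a counterexample, take two states that swap which action is rewarded and then enter the same absorbing state. They are at on-policy distance $0$, so $\epsilon=\mathcal{L}=0$ is admissible. Yet aggregating them with uniform weights halves $V^{*}$.
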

\begin{IEEEproof}
From Thm. 5.1 in \cite{ferns2004metrics} we have
\begin{equation}
\left(1-c\right)\left|V_{\phi}^{*}-V_{\varTheta\left(\phi\right)}^{*}\right|<g(\phi,d)+\frac{\gamma}{1-\gamma}\underset{\rho\in\phi}{\max}g\left(\rho,d\right)\label{eq: thm5.1 in 2004}
\end{equation}
where $g$ describes the average distance between a state and all
other states in its equivalence class under the bisimulation metric
$d$. By specifying $\epsilon$-neighborhood for each cluster of states
we can replace $g$ in (\ref{eq: thm5.1 in 2004}) as follows 
\[
\begin{array}{ccc}
\left(1-c\right)\left|V_{\phi}^{*}-V_{\varTheta\left(\phi\right)}^{*}\right| & \leq & 2\epsilon+\frac{\gamma}{1-\gamma}2\epsilon\\
\left|V_{\phi}^{*}-V_{\varTheta\left(\phi\right)}^{*}\right| & \leq & \frac{1}{1-c}\left(2\epsilon+\frac{\gamma}{1-\gamma}2\epsilon\right)\\
 & = & \frac{2\epsilon}{\left(1-\gamma\right)\left(1-c\right)}\\
 & \leq & \frac{2\epsilon+2\mathcal{L}}{\left(1-\gamma\right)\left(1-c\right)}
\end{array}
\]

\end{IEEEproof}

\begin{figure}
\centering{}\includegraphics[scale=0.26]{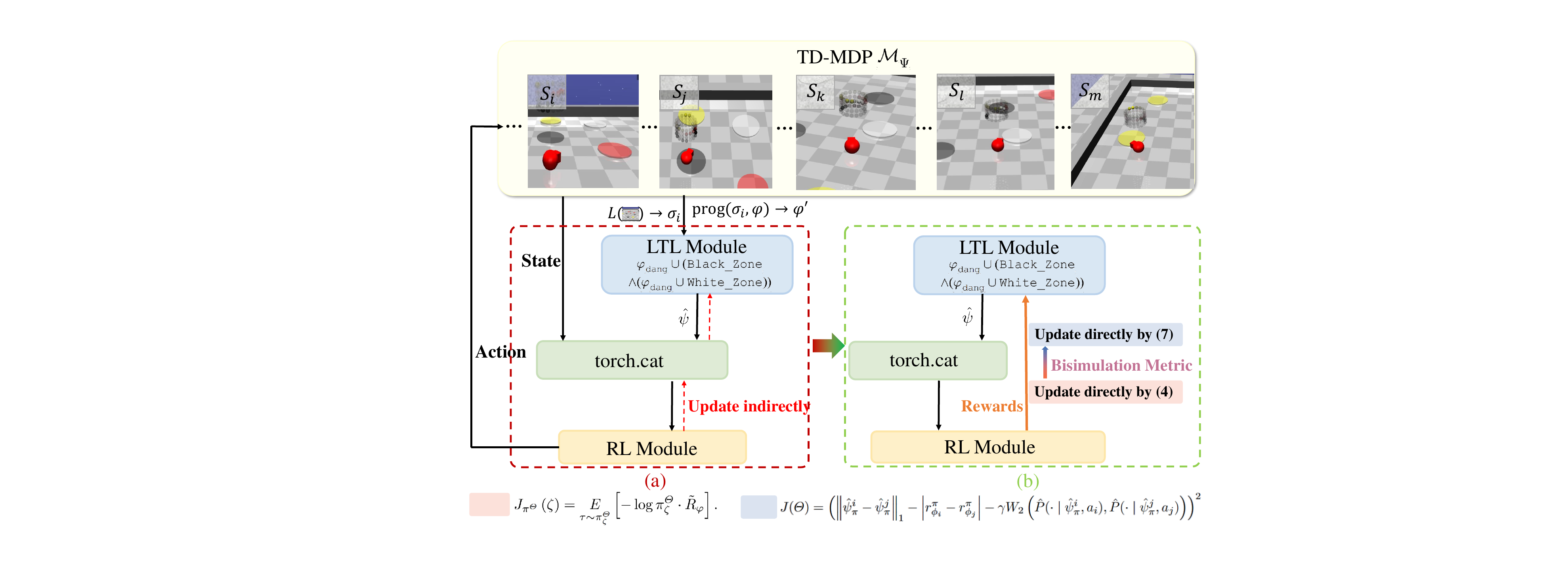}\caption{\textcolor{black}{\label{fig:Update_TLPG} The }comparison of different
update methods for the LTL encoder. (a) Optimize the LTL representations
by the back- from the RL controller. (b) Regard the LTL encoder as
a policy and further optimize it by leveraging the rewards from the
environment.}
\end{figure}

Thm. \ref{thm:OF_bs} demonstrates that the optimal value function
in the original state space and the optimal value function in the
latent space, projected by the $\pi^{*}$-bisimulation metric, is
bounded from above. Leveraging Thm. \ref{thm:BE_bs} and Thm. \ref{thm:OF_bs},
we can bound the cumulative reward of a trajectory under the original
MDP $\mathcal{M}_{\Phi}$ and the latent taskable MDP $\mathcal{\overline{M}}_{\Phi}$.
\begin{thm}
\label{thm:TR_bs}(Trajectory Robustness for $\mathrm{LOTUS}$) Consider
a trajectory $\tau=(\phi_{0},\sigma_{0},\phi_{1},\sigma_{1},...,\phi_{H-1},\sigma_{H-1},\phi_{H})$
in the original state space $\Psi$, and its corresponding encoded
trajectory $\varTheta(\tau)=(\hat{\psi}_{0},\sigma_{0},\hat{\psi}_{1},\sigma_{1},...,\hat{\psi}_{H-1},\sigma_{H-1},\hat{\psi}_{H})$,
where $\hat{\psi}_{K}=\varTheta(\phi_{k})$, with $\varTheta(\cdot)$
defined as in Thm . \ref{thm:OF_bs}. Under the same assumption as
in Thm. \ref{thm:BE_bs} and Thm. \ref{thm:OF_bs} , the following
expected cumulative rewards 
\[
\varPhi\left(\tau\right)=\mathbb{E}_{\tau}\left[\gamma^{H}V_{\phi_{H}}^{*}+\stackrel[h=0]{H-1}{\sum}\gamma^{h}R_{a_{h}}^{\phi_{h}}\right],
\]
\[
\varPhi\left(\varTheta\left(\tau\right)\right)=\mathbb{E}_{\tau}\left[\gamma^{H}V_{\varTheta\left(\phi_{H}\right)}^{*}+\stackrel[h=0]{H-1}{\sum}\gamma^{h}R_{a_{h}}^{\varTheta\left(\phi_{h}\right)}\right]
\]
can be bounded as follows 
\[
\left|\varPhi\left(\tau\right)-\varPhi\left(\varTheta\left(\tau\right)\right)\right|\leq\frac{2\gamma^{H}\left(\epsilon+\mathcal{L}\right)}{\left(1-\gamma\right)\left(1-c\right)}+\frac{2\epsilon\left(1-\gamma^{H}\right)}{\left(1-\gamma\right)\left(1-c\right)}.
\]
\end{thm}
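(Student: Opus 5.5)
The plan is to split $\varPhi(\tau)-\varPhi(\varTheta(\tau))$ by linearity of expectation into a terminal value term and a running reward term. We then bound the first with Thm. \ref{thm:OF_bs} and the second with a per-step reward bound that comes from the $\epsilon$-clustering under the $\pi^{*}$-bisimulation metric of Thm. \ref{thm:BE_bs}. Concretely, we write
\[
\left|\varPhi(\tau)-\varPhi(\varTheta(\tau))\right|\leq\gamma^{H}\,\mathbb{E}_{\tau}\left|V_{\phi_{H}}^{*}-V_{\varTheta(\phi_{H})}^{*}\right|+\sum_{h=0}^{H-1}\gamma^{h}\,\mathbb{E}_{\tau}\left|R_{a_{h}}^{\phi_{h}}-R_{a_{h}}^{\varTheta(\phi_{h})}\right|,
\]
using the triangle inequality and Jensen's inequality to move the absolute value inside the expectation.

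For the terminal term, Thm. \ref{thm:OF_bs} applies pointwise at $\phi_{H}$ and gives $|V_{\phi_{H}}^{*}-V_{\varTheta(\phi_{H})}^{*}|\leq\frac{2\epsilon+2\mathcal{L}}{(1-\gamma)(1-c)}$. Multiplying by $\gamma^{H}$ yields exactly the first summand $\frac{2\gamma^{H}(\epsilon+\mathcal{L})}{(1-\gamma)(1-c)}$.

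For the running term, the target is the per-step bound
\[
\left|R_{a_{h}}^{\phi_{h}}-R_{a_{h}}^{\varTheta(\phi_{h})}\right|\leq\frac{2\epsilon}{1-c}.
\]
To obtain it, we interpret $R^{\varTheta(\phi)}$ as the reward of the aggregated MDP $\mathcal{\overline{M}}_{\Phi}$, that is, the reward of the cluster representative. Every member $\rho$ of the cluster lies within the $\epsilon$-neighborhood, so $d(\phi,\rho)\leq2\epsilon$ by the triangle inequality through the cluster center. The definition of $d$ contains the term $(1-c)|r_{\phi}-r_{\rho}|$, which gives $(1-c)|R^{\phi}-R^{\rho}|\leq d(\phi,\rho)\leq2\epsilon$. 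Averaging over the cluster, as in the definition of $g$ in the proof of Thm. \ref{thm:OF_bs}, preserves this bound. We then sum the geometric series:
\[
\sum_{h=0}^{H-1}\gamma^{h}\frac{2\epsilon}{1-c}=\frac{2\epsilon(1-\gamma^{H})}{(1-\gamma)(1-c)}.
\]
This is the second summand, and adding the two pieces completes the bound.

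The main obstacle is the per-step reward comparison. The metric $d$ controls the policy-averaged reward $r^{\pi}_{\phi}$, not the reward under an arbitrary action $a_{h}$. Thm. \ref{thm:BE_bs}, however, states that the fixed point is a $\pi^{*}$-bisimulation metric. I would argue that along trajectories generated by the converged policy $\pi^{*}$, the action is drawn from $\pi^{*}$, so the relevant reward gap is the $\pi^{*}$-averaged one controlled by $d$. Failing that, I would fall back on the max-over-actions form in (\ref{eq:bs_2011}), which dominates the reward gap for every $\sigma$. I would also note that the learning error $\mathcal{L}$ does not need to enter the reward term, because the clustering is defined with respect to $d$ itself. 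It appears only through the terminal value bound inherited from Thm. \ref{thm:OF_bs}.
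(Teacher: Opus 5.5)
Your proposal follows the paper's proof essentially step for step. It uses the same split into a $\gamma^{H}$-weighted terminal-value term, bounded via Thm.~\ref{thm:OF_bs} at $\phi_{H}$, and a running-reward term bounded by $\frac{2\epsilon}{1-c}$ per step, then sums $\sum_{h=0}^{H-1}\gamma^{h}=\frac{1-\gamma^{H}}{1-\gamma}$. The paper simply asserts the per-step bound $\bigl|R_{a_{h}}^{\phi_{h}}-R_{a_{h}}^{\varTheta(\phi_{h})}\bigr|\leq\frac{2\epsilon}{1-c}$ without justification, so your cluster-diameter argument supplies a step the paper leaves implicit. The subtlety you flag is also real: that step goes through cleanly only with the per-action (max-form) metric of (\ref{eq:bs_2011}), which is the metric behind the Ferns et al.\ result used for Thm.~\ref{thm:OF_bs}, and not with the $\pi^{*}$-averaged one.
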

\begin{IEEEproof}
We can calculating the difference between $\varPhi\left(\tau\right)$
and $\varPhi\left(\varTheta\left(\tau\right)\right)$to prove the
boundary as follows.
\[
\begin{array}{cc}
 & \left|\varPhi\left(\tau\right)-\varPhi\left(\varTheta\left(\tau\right)\right)\right|\\
= & \left|\mathbb{E}_{\tau}\left[\gamma^{H}\left(V_{\phi}^{*}-V_{\varTheta\left(\phi\right)}^{*}\right)+\stackrel[h=0]{H-1}{\sum}\gamma^{h}\left(R_{a_{h}}^{\phi_{h}}-R_{a_{h}}^{\varTheta\left(\phi_{h}\right)}\right)\right]\right|\\
\leq & \gamma^{H}\mathbb{E}_{\tau}\left[\left|V_{\phi}^{*}-V_{\varTheta\left(\phi\right)}^{*}\right|\right]+\stackrel[h=0]{H-1}{\sum}\gamma^{h}\mathbb{E}_{\tau}\left[\left|R_{a_{h}}^{\phi_{h}}-R_{a_{h}}^{\varTheta\left(\phi_{h}\right)}\right|\right]\\
\leq & \frac{2\gamma^{H}\left(\epsilon+\mathcal{L}\right)}{\left(1-\gamma\right)\left(1-c\right)}+\frac{2\epsilon}{\left(1-c\right)}\stackrel[h=0]{H-1}{\sum}\gamma^{h}\\
\leq & \frac{2\gamma^{H}\left(\epsilon+\mathcal{L}\right)}{\left(1-\gamma\right)\left(1-c\right)}+\frac{2\epsilon\left(1-\gamma^{H}\right)}{\left(1-\gamma\right)\left(1-c\right)}
\end{array}
\]

\end{IEEEproof}
Thm. \ref{thm:TR_bs} means that if the cluster radius $\epsilon$
and the encoder error $\mathcal{L}$ are sufficiently small, the learned
representation space does not change the original cumulative rewards
over the same trajectory $\tau$. This suggests that the latent space
retains essential information from the original space.

\section{EXPERIMENTS\label{sec:EXPERIMENTS}}

In this section, the developed LOTUS framework is evaluated against
the state-of-the-art algorithms in simulation and real world. In particular,
we consider the following aspects. \textit{1) Performance:} whether
RGTN can effectively guide the agent in both single-task and multi-task
scenarios. \textit{2) Encoding capability:} how RGTN improves the
performance of the agent compared to other encoders across different
task scenarios. \textit{3) Task Generalization:} analyzing the generalization
performance of the agent under the guidance of LOTUS. \textit{4) Efficiency:}
evaluating the impact of different roles in LOTUS for policy improvement.

\subsection{Experimental Setup}

\textit{1) Environments:} To validate the effectiveness of LOTUS across different task scenarios, we select following four
types of environments as the evaluation benchmark to compare with other baselines.
The first environment is \textit{Robosuite} \cite{nasiriany2022augmenting},
in which four single-task scenarios with dense rewards
feedback are selected for performance comparison, including Stack, Nut Assembly,
Cleanup, and Peg Insertion. The second and third environments are \textit{LetterWorld}
and \textit{ZoneEnv} modified from \cite{vaezipoor2021ltl2action}
with sparse reward setting. The former provides a series of atomic
propositions in discrete space that can effectively validate the algorithm's
guidance in long-horizon tasks, while the latter presents a challenging
environment for validating the algorithm's representation in continuous
environments. The fourth environment is \textit{DexArt} \cite{bao2023dexart},
which differs from \cite{nasiriany2022augmenting} in that it involves
dexterous manipulation on four articulated objects with dense rewards,
including Toilet, Trashcan, Faucet, and Dispenser, and can
evaluate the impact of different algorithmic guidance on the agent's
generalized performance.

\begin{figure*}
\centering{} \includegraphics[scale=0.335]{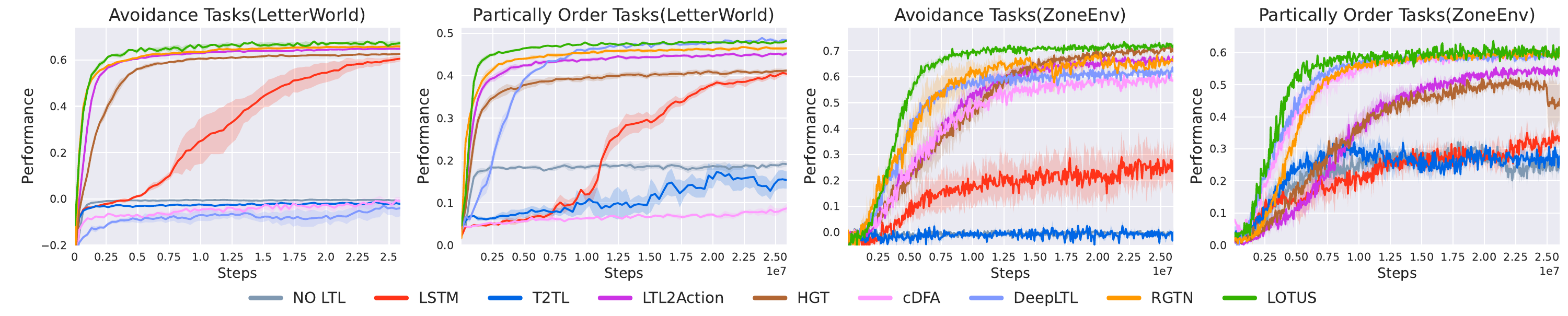}\caption{\textcolor{black}{\label{fig:plot_reward4multi} Plots of discounted
return curves for four kinds of multi-task learning from different
algorithms in two scenarios.}}
\end{figure*}

\begin{figure*}
\centering{} \includegraphics[scale=0.335]{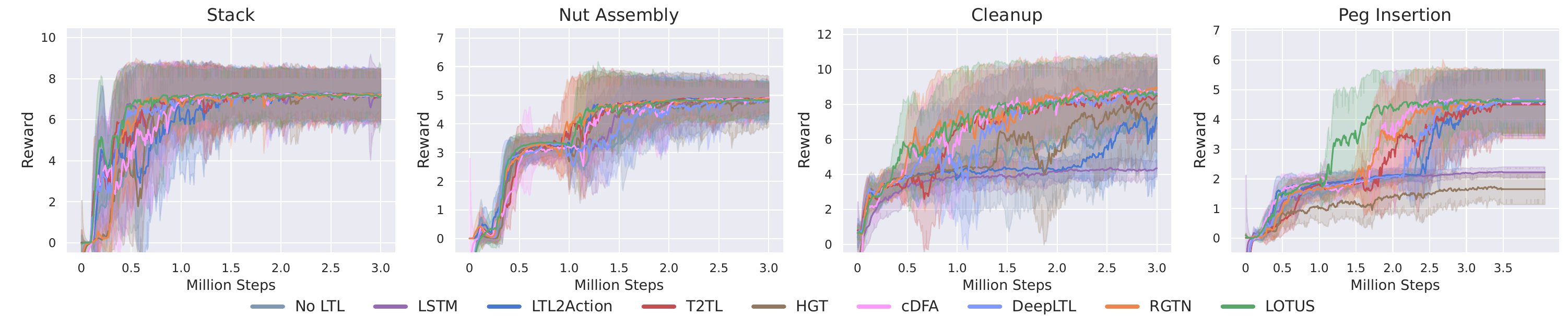}\caption{\textcolor{black}{\label{fig:plot_reward2Single} Plots of normalized
reward curves from different algorithms in four manipulation tasks.}}
\end{figure*}

\textit{2) Baselines:} To demonstrate the effectiveness of the LOTUS
framework, we empirically compare it against the following baselines. The
first baseline is DFA \cite{Lacerda2014}, which constructs
the product MDP for LTL tasks over a finite horizon. The second
baseline is LSTM \cite{Kuo2020}, which encodes sequence vectors
interpreted by LTL specifications to enable zero-shot
generalization. The third baseline is T2TL \cite{zhang2023exploiting},
which performs well in single task scenarios by leveraging an LTL representation encoded by Transformer. The fourth baseline is LTL2Action \cite{vaezipoor2021ltl2action}, which employs an R-GCN to represent LTL instructions in multi-task settings and guides the agent in learning task-conditioned policies.
The fifth baseline is Heterogeneous Graph Transformer (HGT) \cite{hu2020heterogeneous}, which extends R-GCN with Transformer architecture to better capture features from heterogeneous graphs. \textcolor{black}{The sixth baseline is cDFA from \cite{yalcinkaya2024compositional}
which encodes temporal goals by GAT \cite{brody2021attentive} and
leverages cDFA to guide the RL agent.} \textcolor{black}{The seventh
baseline is DEEPLTL from \cite{deepltl} , which explicitly represents the
semantics of LTL specifications encoded from the $\mathrm{B\ddot{u}chi}$
automata to learn policies conditioned on sequences of truth assignments
that lead to satisfying the desired formulae. }The eighth baseline is our proposed RGTN in Sec. \ref{subsec:RGTN} to validate its representation ability. Finally, we add a No LTL baseline, where the agent operates without LTL guidance, to highlight the importance of task representations.

\subsection{Main Performance in Diverse Task Scenarios}

\textit{1) Multi-task Performance.} We first present the reward
curves of different algorithms in multi-task environments, including LetterWorld and ZoneEnv, as shown in Fig. \ref{fig:plot_reward4multi}. Several
observations can be made. \textcolor{black}{(1) Most of methods guided
by the LTL representation can achieve better task performance compared
to the algorithm without LTL representation. This validates that extracting
task semantics from LTL formulas effectively mitigates exploration
inefficiency and goal ambiguity in multi-task learning. (2) Modeling
LTL formulas as tree structures (R-GCN, HGT, cDFA, RGTN and LOTUS)
is more advantageous than sequence vectors (LSTM and Transformer)
when combined with LTL encoders to enhance their representational
capabilities in multi-task scenarios. Tree structures naturally capture
the hierarchical relationships and associative properties inherent
in LTL formulas, whereas sequence vectors flatten temporal logic into
linear sequences, losing cross-task structural invariance. (3) Algorithms
that only consider edge-relative information (R-GCN) or only involve
attention computation (Transformer and cDFA) perform worse than those
(HGT, RGTN and LOTUS) that integrate both. R-GCN can only model local
edge relationships and struggles to capture global task structures.
Transformer and cDFA rely on attention to obtain global information
but lack explicit relationship modeling and exhibit unstable convergence
in multi-task scenarios. (4) DeepLTL typically performs better in
partially-ordered tasks, as the absence of obstacles in such scenarios
minimizes significant disruptions to curriculum learning. (5) Unlike
HGT, which processes node features on edges, RGTN better guides agent
actions by further integrating local and global information. In Avoidance
task of ZoneEnv with increasing conjunctions, the return of HGT drop
from 0.992 under i.i.d settings to 0.576, while RGTN maintains a higher
reward of 0.719. (6) Thanks to TLPG and bisimulation metric, LOTUS
demonstrates faster convergence trends and more stable training performance
across all environments. By eliminating the dependency of encoder
updates on RL controller gradients, LOTUS accelerates task convergence
compared to indirect update methods like LTL2Action. Simultaneously,
the bisimulation metric mitigates representation instability.}

\textit{2) Single-task Performance.} We then present the training
performance of different algorithms in four Robosuite environments 
\cite{nasiriany2022augmenting}. From Fig. \ref{fig:plot_reward2Single},
we can observe the following interesting results. (1) Under the guidance of LTL representation (Transformer, HGT, RGTN, and LOTUS), the training efficiency of the baseline
manipulation algorithm (No LTL) is notably improved. (2) Although modeling LTL formulas as sequence vectors performs poorly in multi-task scenarios, it demonstrates competitive performance in single-task settings (Transformer).
(3) LOTUS and RGTN consistently outperform other algorithms, with LOTUS showing a clear advantage in the challenging peg insertion task, converging approximately 2500 episodes earlier than LSTM.

\subsection{Generalization Evaluation}

\begin{table}
\caption{\label{tab:Genal2ZoneEnv}Total return of different methods
in two scenarios (LetterWorld and ZoneEnv) with different task types.}

\centering{}\resizebox{0.41\textwidth}{!}{%%
\begin{tabular}{c|c|c|c}
\hline 
\textcolor{black}{Task} & \multicolumn{3}{c}{\textbf{\textcolor{black}{LetterEnv}}}\tabularnewline
\hline 
\textcolor{black}{Type} & \multicolumn{3}{c}{\textcolor{black}{Avoidance Tasks}}\tabularnewline
\hline 
\textcolor{black}{Split} & \textcolor{black}{I.I.D} & \textcolor{black}{Depth ($\uparrow$)} & \textcolor{black}{Conjuncts ($\uparrow$)}\tabularnewline
\hline 
\textcolor{black}{No LTL} & \textcolor{black}{-0.054 $\pm$ 0.228} & \textcolor{black}{-0.054 $\pm$ 0.229} & \textcolor{black}{-0.072 $\pm$ 0.248}\tabularnewline
\textcolor{black}{LSTM} & \textcolor{black}{0.883 $\pm$ 0.246} & \textcolor{black}{0.464 $\pm$ 0.551} & \textcolor{black}{0.164 $\pm$ 0.406}\tabularnewline
\textcolor{black}{T1TL} & \textcolor{black}{-0.071 $\pm$ 0.344} & \textcolor{black}{-0.086 $\pm$ 0.278} & \textcolor{black}{-0.223 $\pm$ 0.422}\tabularnewline
\textcolor{black}{LTL2Action} & \textcolor{black}{0.965 $\pm$ 0.233} & \textcolor{black}{0.899 $\pm$ 0.314} & \textcolor{black}{0.536 + 0.557}\tabularnewline
\textcolor{black}{HGT} & \textcolor{black}{0.963 $\pm$ 0.251} & \textcolor{black}{0.783 $\pm$ 0.424} & \textcolor{black}{0.487 $\pm$ 0.593}\tabularnewline
\textcolor{black}{cDFAs} & \textcolor{black}{-0.050 $\pm$ 0.363} & \textcolor{black}{-0.141 $\pm$ 0.358} & \textcolor{black}{-0.170 $\pm$ 0.391}\tabularnewline
\textcolor{black}{DeepLTL} & \textcolor{black}{-0.075 $\pm$ 0.392} & \textcolor{black}{-0.089 $\pm$ 0.298} & \textcolor{black}{-0.243 $\pm$ 0.432}\tabularnewline
\hline 
\textcolor{black}{RGTN} & \textcolor{black}{0.991 $\pm$ 0.124} & \textcolor{black}{0.905 $\pm$ 0.172} & \textcolor{black}{0.614 $\pm$ 0.377}\tabularnewline
\textcolor{black}{LOTUS} & \textbf{\textcolor{black}{0.993}}\textcolor{black}{{} $\pm$ }\textbf{\textcolor{black}{0.113}} & \textbf{\textcolor{black}{0.915 $\pm$ 0.126}} & \textbf{\textcolor{black}{0.648}}\textcolor{black}{{} $\pm$ }\textbf{\textcolor{black}{0.448}}\tabularnewline
\hline 
\textcolor{black}{Task} & \multicolumn{3}{c}{\textbf{\textcolor{black}{LetterEnv}}}\tabularnewline
\hline 
\textcolor{black}{Type} & \multicolumn{3}{c}{\textcolor{black}{Partially-Ordered Tasks}}\tabularnewline
\hline 
\textcolor{black}{Split} & \textcolor{black}{I.I.D} & \textcolor{black}{Depth ($\uparrow$)} & \textcolor{black}{Conjuncts ($\uparrow$)}\tabularnewline
\hline 
\textcolor{black}{No LTL} & \textcolor{black}{0.988 $\pm$ 0.063} & \textcolor{black}{0.048 $\pm$ 0.234} & \textcolor{black}{0.265 $\pm$ 0.365}\tabularnewline
\textcolor{black}{LSTM} & \textcolor{black}{0.995 $\pm$ 0.058} & \textcolor{black}{0.299 $\pm$ 0.461} & \textcolor{black}{0.557 $\pm$ 0.401}\tabularnewline
\textcolor{black}{T1TL} & \textcolor{black}{0.782 $\pm$ 0.296} & \textcolor{black}{0.046 $\pm$ 0.221} & \textcolor{black}{0.429 $\pm$ 0.403}\tabularnewline
\textcolor{black}{LTL2Action} & \textcolor{black}{1.000 $\pm$ 0.000} & \textcolor{black}{0.608 $\pm$ 0.386} & \textcolor{black}{0.729 $\pm$ 0.331}\tabularnewline
\textcolor{black}{HGT} & \textcolor{black}{1.000 $\pm$ 0.000} & \textcolor{black}{0.520 $\pm$ 0.396} & \textcolor{black}{0.433 $\pm$ 0.458}\tabularnewline
\textcolor{black}{cDFAs} & \textcolor{black}{0.986 $\pm$ 0.077} & \textcolor{black}{0.309 $\pm$ 0.461} & \textcolor{black}{0.298 $\pm$ 0.436}\tabularnewline
\textcolor{black}{DeepLTL} & \textcolor{black}{1.000 $\pm$ 0.000} & \textcolor{black}{0.705 $\pm$ 0.372} & \textcolor{black}{0.893 $\pm$ 0.195}\tabularnewline
\hline 
\textcolor{black}{RGTN} & \textcolor{black}{1.000 $\pm$ 0.000} & \textcolor{black}{0.731 $\pm$ 0.350} & \textcolor{black}{0.837 $\pm$ 0.214}\tabularnewline
\textcolor{black}{LOTUS} & \textbf{\textcolor{black}{1.000}}\textcolor{black}{{} $\pm$ }\textbf{\textcolor{black}{0.000}} & \textbf{\textcolor{black}{0.760}}\textcolor{black}{{} $\pm$ }\textbf{\textcolor{black}{0.332}} & \textbf{\textcolor{black}{0.934}}\textcolor{black}{{} $\pm$ }\textbf{\textcolor{black}{0.102}}\tabularnewline
\hline 
\textcolor{black}{Task} & \multicolumn{3}{c}{\textbf{\textcolor{black}{ZoneEnv}}}\tabularnewline
\hline 
\textcolor{black}{Type} & \multicolumn{3}{c}{\textcolor{black}{Avoidance Tasks}}\tabularnewline
\hline 
\textcolor{black}{Split} & \textcolor{black}{I.I.D} & \textcolor{black}{Depth ($\uparrow$)} & \textcolor{black}{Conjuncts ($\uparrow$)}\tabularnewline
\hline 
\textcolor{black}{No LTL} & \textcolor{black}{-0.229 $\pm$ 0.675} & \textcolor{black}{-0.415 $\pm$ 0.593} & \textcolor{black}{-0.539 $\pm$ 0.540}\tabularnewline
\textcolor{black}{LSTM} & \textcolor{black}{0.378 $\pm$ 0.605} & \textcolor{black}{0.010 $\pm$ 0.322} & \textcolor{black}{0.429 $\pm$ 0.610}\tabularnewline
\textcolor{black}{T1TL} & \textcolor{black}{-0.004 $\pm$ 0.450} & \textcolor{black}{-0.025 $\pm$ 0.191} & \textcolor{black}{-0.092 $\pm$ 0.307}\tabularnewline
\textcolor{black}{LTL2Action} & \textcolor{black}{0.910 $\pm$ 0.170} & \textcolor{black}{0.928 $\pm$ 0.194} & \textcolor{black}{0.547 $\pm$ 0.535}\tabularnewline
\textcolor{black}{HGT} & \textcolor{black}{0.992 $\pm$ 0.065} & \textcolor{black}{0.960 + 0.254} & \textcolor{black}{0.576 $\pm$ 0.609}\tabularnewline
\textcolor{black}{cDFAs} & \textcolor{black}{0.811 $\pm$ 0.235} & \textcolor{black}{0.629 $\pm$ 0.416} & \textcolor{black}{0.605 $\pm$ 0.652}\tabularnewline
\textcolor{black}{DeepLTL} & \textcolor{black}{0.898 $\pm$ 0.195} & \textcolor{black}{0.908 $\pm$ 0.204} & \textcolor{black}{0.539 $\pm$ 0.557}\tabularnewline
\hline 
\textcolor{black}{RGTN} & \textcolor{black}{0.975 $\pm$ 0.136} & \textcolor{black}{0.968 $\pm$ 0.149} & \textcolor{black}{0.719 $\pm$ 0.497}\tabularnewline
\textcolor{black}{LOTUS} & \textbf{\textcolor{black}{1.000}}\textcolor{black}{{} $\pm$ }\textbf{\textcolor{black}{0.000}} & \textbf{\textcolor{black}{1.000 $\pm$ 0.000}} & \textbf{\textcolor{black}{0.743 $\pm$ 0.460}}\tabularnewline
\hline 
\textcolor{black}{Task} & \multicolumn{3}{c}{\textbf{\textcolor{black}{ZoneEnv}}}\tabularnewline
\hline 
\textcolor{black}{Type} & \multicolumn{3}{c}{\textcolor{black}{Partially-Ordered Tasks}}\tabularnewline
\hline 
\textcolor{black}{Split} & \textcolor{black}{I.I.D} & \textcolor{black}{Depth ($\uparrow$)} & \textcolor{black}{Conjuncts ($\uparrow$)}\tabularnewline
\hline 
\textcolor{black}{No LTL} & \textcolor{black}{0.738 $\pm$ 0.434} & \textcolor{black}{0.089 $\pm$ 0.270} & \textcolor{black}{0.219 $\pm$ 0.419}\tabularnewline
\textcolor{black}{LSTM} & \textcolor{black}{0.786 $\pm$ 0.385} & \textcolor{black}{0.094 $\pm$ 0.285} & \textcolor{black}{0.234 $\pm$ 0.425}\tabularnewline
\textcolor{black}{T1TL} & \textcolor{black}{0.644 $\pm$ 0.475} & \textcolor{black}{0.031 $\pm$ 0.191} & \textcolor{black}{0.075 $\pm$ 0.280}\tabularnewline
\textcolor{black}{LTL2Action} & \textcolor{black}{0.991 $\pm$ 0.063} & \textcolor{black}{0.608 $\pm$ 0.487} & \textcolor{black}{0.739 $\pm$ 0.429}\tabularnewline
\textcolor{black}{HGT} & \textcolor{black}{1.000 $\pm$ 0.000} & \textcolor{black}{0.614 $\pm$ 0.483} & \textcolor{black}{0.822 $\pm$ 0.385}\tabularnewline
\textcolor{black}{cDFAs} & \textcolor{black}{0.995 $\pm$ 0.050} & \textcolor{black}{0.527 $\pm$ 0.496} & \textcolor{black}{0.864 $\pm$ 0.350}\tabularnewline
\textcolor{black}{DeepLTL} & \textcolor{black}{1.000 $\pm$ 0.000} & \textcolor{black}{0.681 $\pm$ 0.432} & \textcolor{black}{0.842 $\pm$ 0.385}\tabularnewline
\hline 
\textcolor{black}{RGTN} & \textcolor{black}{1.000 $\pm$ 0.000} & \textcolor{black}{0.683 $\pm$ 0.425} & \textcolor{black}{0.839 $\pm$ 0.392}\tabularnewline
\textcolor{black}{LOTUS} & \textbf{\textcolor{black}{1.000 $\pm$ 0.000}} & \textbf{\textcolor{black}{0.709 $\pm$ 0.405}} & \textbf{\textcolor{black}{0.871 $\pm$ 0.312}}\tabularnewline
\hline 
\end{tabular}}
\end{table}

\textit{1) Motion Planning Generalization.} We further evaluate
the performance of different algorithms on LetterEnv and ZoneEnv under the original task distribution (i.i.d), as well as under more challenging settings with longer sequences (Depth$\uparrow$)
and increased conjunctions (Conjuncts$\uparrow$) across two task types:
Partially-Ordered and Avoidance Tasks. Detailed task settings can
be found on our \href{https://lotus-website.github.io/}{website}. 

We \textcolor{black}{present the corresponding generalization results
in Table \ref{tab:Genal2ZoneEnv}. From Table \ref{tab:Genal2ZoneEnv},
several observations can be made. (1) Within the
initial task distribution, LOTUS consistently achieves higher returns
and lower variance across all task scenarios, indicating that it not
only performs better but also exhibits greater stability. And our
RGTN algorithm further demonstrates the effectiveness of the architecture.
(2) With the increase of the depth and
the conjunctions of tasks, algorithms
with directed graph modeling (R-GCN, cDFA, HGT, RGTN and LOTUS)
exhibit more robust performance, highlighting the effectiveness of
graph neural network for motion planning in multi-task scenarios.
However, compared with other algorithms, cDFA shows a more pronounced
decline in returns relative to the initial tasks across both task
settings. (3) We observe that Transformer from \cite{zhang2023exploiting}
fails to achieve satisfactory performance in any multi-task setting.
This may be attributed to its architectural limitations in accommodating
heterogeneous task representations, leading to representational conflicts
and imprecise feature modeling. (4) Both LTL2Action and HGT perform
competitive performance across various task scenarios. Nevertheless,
the former achieves superior results in discrete environments, whereas
the latter performs better in continuous environments. This divergence
may stem from the following differences: LTL2Action depends more on
explicit relational constraints and the stability of local neighbor
aggregation, and HGT benefits from modeling similarity among dynamic
features and capturing global dependencies. (5) DeepLTL demonstrates
strong performance across most scenarios, highlighting the importance
of sequence modules and permutation invariance for task representation
encoding. (6) In most multi-task settings, LOTUS achieves the best
returns along with greater stability. This advantage primarily arises
from the strong representational capacity of the RGTN architecture
and the stabilizing effect introduced by the bisimulation metric during
representation optimization.}

\begin{table*}
\caption{\label{tab:Genal2DexArt}Comparison of success rate on four
manipulation scenarios \textcolor{black}{ (Toilet, Trashcan, Faucet,
and Dispenser tasks)}
for both seen and unseen objects.}

\centering{}\resizebox{0.89\textwidth}{!}{%%
\begin{tabular}{c|cc|cc|cccc}
\hline 
Task & \multicolumn{2}{c|}{Toilet} & \multicolumn{2}{c|}{Trashcan} & \multicolumn{2}{c}{Faucet} & \multicolumn{2}{c}{Dispenser}\tabularnewline
\hline 
Split & Seen & Unseen & Seen & Unseen & Seen & Unseen & Seen & Unseen\tabularnewline
\hline 
DFA & 0.815 $\pm$ 0.062 & 0.584 $\pm$ 0.069 & 0.643 $\pm$ 0.065 & 0.494 $\pm$ 0.008 & 0.687 $\pm$ 0.101 & 0.509 $\pm$ 0.112 & 0.746 $\pm$ 0.057 & 0.220 $\pm$ 0.031\tabularnewline
LSTM & 0.811 $\pm$ 0.033 & 0.566 $\pm$ 0.056 & 0.733 $\pm$ 0.061 & 0.566 $\pm$ 0.147 & 0.799 $\pm$ 0.066 & 0.609 $\pm$ 0.097 & 0.656 $\pm$ 0.052 & 0.258 $\pm$ 0.025\tabularnewline
T2TL & 0.844 $\pm$ 0.028 & 0.695 $\pm$ 0.116 & 0.570 $\pm$ 0.135 & 0.444 $\pm$ 0.034 & 0.842 $\pm$ 0.062 & 0.657 $\pm$ 0.104 & 0.733 $\pm$ 0.037 & 0.249 $\pm$ 0.071\tabularnewline
LTL2Action & 0.786 $\pm$ 0.074 & 0.603 $\pm$ 0.067 & 0.513 $\pm$ 0.120 & 0.477 $\pm$ 0.156 & 0.763 $\pm$ 0.064 & 0.595 $\pm$ 0.007 & 0.647 $\pm$ 0.105 & 0.171 $\pm$ 0.052\tabularnewline
HGT & 0.831 $\pm$ 0.030 & 0.651 $\pm$ 0.038 & 0.646 $\pm$ 0.057 & 0.555 $\pm$ 0.150 & 0.775 $\pm$ 0.075 & 0.585 $\pm$ 0.102 & 0.746 $\pm$ 0.024 & 0.271 $\pm$ 0.033\tabularnewline
\textcolor{black}{cDFA} & \textcolor{black}{0.840 $\pm$ 0.060} & \textcolor{black}{0.644 $\pm$ 0.051} & \textcolor{black}{0.509 $\pm$ 0.064} & \textcolor{black}{0.405 $\pm$ 0.122} & \textcolor{black}{0.866 $\pm$ 0.018} & \textcolor{black}{0.709 $\pm$ 0.076} & \textcolor{black}{0.696 $\pm$ 0.055} & \textcolor{black}{0.233 $\pm$ 0.031}\tabularnewline
\textcolor{black}{DeepLTL} & \textcolor{black}{0.789 $\pm$ 0.064} & \textcolor{black}{0.609 $\pm$ 0.012} & \textcolor{black}{0.736 $\pm$ 0.020} & \textcolor{black}{0.627 $\pm$ 0.047} & \textcolor{black}{0.590 $\pm$ 0.114} & \textcolor{black}{0.547 $\pm$ 0.007} & \textcolor{black}{0.710 $\pm$ 0.100} & \textcolor{black}{0.254 $\pm$ 0.071}\tabularnewline
\hline 
RGTN & 0.880 $\pm$ 0.020 & 0.590 $\pm$ 0.007 & 0.477 $\pm$ 0.181 & 0.505 $\pm$ 0.020 & 0.880 $\pm$ 0.012 & 0.747 $\pm$ 0.029 & 0.736 $\pm$ 0.032 & 0.241 $\pm$ 0.032\tabularnewline
LOTUS & \textbf{0.888 $\pm$ 0.004} & \textbf{0.728 $\pm$ 0.093} & \textbf{0.756 $\pm$ 0.023} & \textbf{0.644 $\pm$ 0.008} & \textbf{0.920 $\pm$ 0.022} & \textbf{0.794 $\pm$ 0.013} & \textbf{0.806 $\pm$ 0.041} & \textbf{0.337 $\pm$ 0.035}\tabularnewline
\hline 
\end{tabular}}
\end{table*}

\textit{2) Manipulation Generalization.} To evaluate the generalization performance of different task representation algorithms in guiding agents at the manipulation level, experiments are conducted in four articulated environments based on \cite{bao2023dexart}.
From Table \ref{tab:Genal2DexArt}, we have the following observations. (1) Compared with Table \ref{tab:Genal2ZoneEnv}, Transformer demonstrates stronger
generalization in manipulation environments, likely because tasks in these settings involve fewer sub-goals and display clearer structural regularities, making it easier for a lightweight encoder to extract essential features.
(2) HGT achieves better generalization than
LTL2Action, underscoring the effectiveness of the MHA method for the generalization of manipulation tasks. (3) RGTN consistently outperforms other baselines across most task scenarios, demonstrating the effectiveness of integrating global node and local edge-type features. (4) \textcolor{black}{LOTUS achieves the highest
returns and improved stability across most multi-task settings. This
performance gain stems from two key factors: the strong representational
capacity of the RGTN architecture and the stabilizing effect of the
bisimulation metric during representation optimization.}

\subsection{Quantitative Evaluation}

\begin{table}
\caption{\label{tab:Genal2ZoneEnv_R1C3}\textcolor{black}{P-values obtained
for other algorithms relative to LOTUS via the Wilcoxon signed-rank
test in }LetterWorld and ZoneEnv tasks. (/ indicates the same value
as LOTUS, so no Wilcoxon test is performed.)}

\centering{}\resizebox{0.40\textwidth}{!}{%%
\begin{tabular}{c|c|c|c|c|c|c}
\hline 
\textcolor{black}{Task} & \multicolumn{6}{c}{\textbf{\textcolor{black}{LetterWorld}}}\tabularnewline
\hline 
\textcolor{black}{Type} & \multicolumn{3}{c|}{\textcolor{black}{Avoidance Tasks}} & \multicolumn{3}{c}{\textcolor{black}{Partially-Ordered Tasks}}\tabularnewline
\hline 
\textcolor{black}{p-value ($\downarrow$)} & \textcolor{black}{I.I.D} & \textcolor{black}{Depth ($\uparrow$)} & \textcolor{black}{{} ($\uparrow$)} & \textcolor{black}{I.I.D} & \textcolor{black}{Depth ($\uparrow$)} & \textcolor{black}{{} ($\uparrow$)}\tabularnewline
\hline 
\textcolor{black}{No LTL} & \textbf{\textcolor{black}{<0.001}} & \textbf{\textcolor{black}{<0.001}} & \textbf{\textcolor{black}{<0.001}} & \textbf{\textcolor{black}{0.001}} & \textbf{\textcolor{black}{<0.001}} & \textbf{\textcolor{black}{<0.001}}\tabularnewline
\textcolor{black}{LSTM} & \textcolor{black}{0.164} & \textbf{\textcolor{black}{0.001}} & \textcolor{black}{0.012} & \textcolor{black}{0.008} & \textbf{\textcolor{black}{0.004}} & \textbf{\textcolor{black}{<0.001}}\tabularnewline
\textcolor{black}{T1TL} & \textbf{\textcolor{black}{<0.001}} & \textbf{\textcolor{black}{<0.001}} & \textbf{\textcolor{black}{<0.001}} & \textcolor{black}{0.005} & \textbf{\textcolor{black}{<0.001}} & \textbf{\textcolor{black}{<0.001}}\tabularnewline
\textcolor{black}{LTL2Action} & \textcolor{black}{0.869} & \textcolor{black}{0.956} & \textcolor{black}{0.784} & \textcolor{black}{/} & \textcolor{black}{0.245} & \textbf{\textcolor{black}{0.001}}\tabularnewline
\textcolor{black}{HGT} & \textcolor{black}{0.869} & \textcolor{black}{0.277} & \textcolor{black}{0.595} & \textcolor{black}{/} & \textcolor{black}{0.164} & \textbf{\textcolor{black}{<0.001}}\tabularnewline
\textcolor{black}{cDFAs} & \textbf{\textcolor{black}{<0.001}} & \textbf{\textcolor{black}{<0.001}} & \textbf{\textcolor{black}{<0.001}} & \textcolor{black}{0.595} & \textbf{\textcolor{black}{0.004}} & \textbf{\textcolor{black}{<0.001}}\tabularnewline
\textcolor{black}{DeepLTL} & \textbf{\textcolor{black}{<0.001}} & \textbf{\textcolor{black}{<0.001}} & \textbf{\textcolor{black}{<0.001}} & \textcolor{black}{/} & \textcolor{black}{0.812} & \textcolor{black}{0.985}\tabularnewline
\textcolor{black}{RGTN} & \textcolor{black}{0.153} & \textcolor{black}{0.176} & \textcolor{black}{0.261} & \textcolor{black}{/} & \textcolor{black}{0.189} & \textcolor{black}{0.898}\tabularnewline
\hline 
\textcolor{black}{Task} & \multicolumn{6}{c}{\textbf{\textcolor{black}{ZoneEnv}}}\tabularnewline
\hline 
\textcolor{black}{Type} & \multicolumn{3}{c|}{\textcolor{black}{Avoidance Tasks}} & \multicolumn{3}{c}{\textcolor{black}{Partially-Ordered Tasks}}\tabularnewline
\hline 
\textcolor{black}{Split} & \textcolor{black}{I.I.D} & \textcolor{black}{Depth ($\uparrow$)} & \textcolor{black}{{} ($\uparrow$)} & \textcolor{black}{I.I.D} & \textcolor{black}{Depth ($\uparrow$)} & \textcolor{black}{{} ($\uparrow$)}\tabularnewline
\hline 
\textcolor{black}{No LTL} & \textbf{\textcolor{black}{<0.001}} & \textbf{\textcolor{black}{<0.001}} & \textbf{\textcolor{black}{<0.001}} & \textbf{\textcolor{black}{0.001}} & \textbf{\textcolor{black}{<0.001}} & \textbf{\textcolor{black}{<0.001}}\tabularnewline
\textcolor{black}{LSTM} & \textbf{\textcolor{black}{<0.001}} & \textbf{\textcolor{black}{<0.001}} & \textcolor{black}{0.176} & \textcolor{black}{0.008} & \textbf{\textcolor{black}{<0.001}} & \textbf{\textcolor{black}{<0.001}}\tabularnewline
\textcolor{black}{T1TL} & \textbf{\textcolor{black}{<0.001}} & \textbf{\textcolor{black}{<0.001}} & \textbf{\textcolor{black}{<0.001}} & \textcolor{black}{0.005} & \textbf{\textcolor{black}{<0.001}} & \textbf{\textcolor{black}{<0.001}}\tabularnewline
\textcolor{black}{LTL2Action} & \textbf{\textcolor{black}{0.002}} & \textcolor{black}{0.015} & \textcolor{black}{0.294} & \textcolor{black}{0.277} & \textcolor{black}{0.812} & \textcolor{black}{0.368}\tabularnewline
\textcolor{black}{HGT} & \textcolor{black}{0.452} & \textcolor{black}{0.409} & \textcolor{black}{0.595} & \textcolor{black}{/} & \textcolor{black}{0.812} & \textcolor{black}{0.985}\tabularnewline
\textcolor{black}{cDFAs} & \textbf{\textcolor{black}{0.001}} & \textbf{\textcolor{black}{<0.001}} & \textcolor{black}{0.545} & \textcolor{black}{0.595} & \textcolor{black}{0.498} & \textcolor{black}{0.985}\tabularnewline
\textcolor{black}{DeepLTL} & \textcolor{black}{0.012} & \textcolor{black}{0.048} & \textcolor{black}{0.728} & \textcolor{black}{/} & \textcolor{black}{0.647} & \textcolor{black}{0.647}\tabularnewline
\textcolor{black}{RGTN} & \textcolor{black}{0.728} & \textcolor{black}{0.784} & \textcolor{black}{0.164} & \textcolor{black}{/} & \textcolor{black}{0.176} & \textcolor{black}{0.202}\tabularnewline
\hline 
\end{tabular}}
\end{table}
\begin{table}
\caption{\label{tab:Genal2DexArt_R1C3}\textcolor{black}{P-values obtained
for other algorithms relative to LOTUS via the Wilcoxon signed-rank
test in} four manipulation scenarios (Toilet, Trashcan, Faucet, and
Dispenser tasks) for Both Seen and Unseen Objects.}

\centering{}\resizebox{0.45\textwidth}{!}{%%
\begin{tabular}{c|cc|cc|cc|cc}
\hline 
\textcolor{black}{Task} & \multicolumn{2}{c|}{\textcolor{black}{Toilet}} & \multicolumn{2}{c|}{\textcolor{black}{Trashcan}} & \multicolumn{2}{c|}{\textcolor{black}{Faucet}} & \multicolumn{2}{c}{\textcolor{black}{Dispenser}}\tabularnewline
\hline 
\textcolor{black}{Split} & \textcolor{black}{Seen} & \textcolor{black}{Unseen} & \textcolor{black}{Seen} & \textcolor{black}{Unseen} & \textcolor{black}{Seen} & \textcolor{black}{Unseen} & \textcolor{black}{Seen} & \textcolor{black}{Unseen}\tabularnewline
\hline 
\textcolor{black}{DFA} & \textbf{\textcolor{black}{0.002}} & \textbf{\textcolor{black}{0.002}} & \textbf{\textcolor{black}{<0.001}} & \textbf{\textcolor{black}{<0.001}} & \textbf{\textcolor{black}{<0.001}} & \textbf{\textcolor{black}{<0.001}} & \textbf{\textcolor{black}{<0.001}} & \textbf{\textcolor{black}{<0.001}}\tabularnewline
\textcolor{black}{LSTM} & \textbf{\textcolor{black}{0.002}} & \textbf{\textcolor{black}{0.002}} & \textcolor{black}{0.202} & \textcolor{black}{0.017} & \textbf{\textcolor{black}{<0.001}} & \textbf{\textcolor{black}{<0.001}} & \textbf{\textcolor{black}{<0.001}} & \textbf{\textcolor{black}{<0.001}}\tabularnewline
\textcolor{black}{T2TL} & \textbf{\textcolor{black}{0.003}} & \textcolor{black}{0.084} & \textbf{\textcolor{black}{<0.001}} & \textbf{\textcolor{black}{<0.001}} & \textbf{\textcolor{black}{<0.001}} & \textbf{\textcolor{black}{<0.001}} & \textbf{\textcolor{black}{<0.001}} & \textbf{\textcolor{black}{<0.001}}\tabularnewline
\textcolor{black}{LTL2Action} & \textbf{\textcolor{black}{0.002}} & \textbf{\textcolor{black}{0.002}} & \textbf{\textcolor{black}{<0.001}} & \textbf{\textcolor{black}{<0.001}} & \textbf{\textcolor{black}{<0.001}} & \textbf{\textcolor{black}{<0.001}} & \textbf{\textcolor{black}{<0.001}} & \textbf{\textcolor{black}{<0.001}}\tabularnewline
\textcolor{black}{HGT} & \textbf{\textcolor{black}{0.002}} & \textbf{\textcolor{black}{0.002}} & \textbf{\textcolor{black}{<0.001}} & \textcolor{black}{0.021} & \textbf{\textcolor{black}{<0.001}} & \textbf{\textcolor{black}{<0.001}} & \textbf{\textcolor{black}{<0.001}} & \textbf{\textcolor{black}{<0.001}}\tabularnewline
\textcolor{black}{cDFA} & \textcolor{black}{0.037} & \textbf{\textcolor{black}{0.003}} & \textbf{\textcolor{black}{<0.001}} & \textbf{\textcolor{black}{<0.001}} & \textbf{\textcolor{black}{<0.001}} & \textbf{\textcolor{black}{<0.001}} & \textbf{\textcolor{black}{<0.001}} & \textbf{\textcolor{black}{<0.001}}\tabularnewline
\textcolor{black}{DeepLTL} & \textbf{\textcolor{black}{0.002}} & \textbf{\textcolor{black}{0.002}} & \textcolor{black}{0.089} & \textcolor{black}{0.409} & \textbf{\textcolor{black}{<0.001}} & \textbf{\textcolor{black}{<0.001}} & \textbf{\textcolor{black}{0.002}} & \textbf{\textcolor{black}{<0.001}}\tabularnewline
\textcolor{black}{RGTN} & \textcolor{black}{0.130} & \textbf{\textcolor{black}{0.002}} & \textbf{\textcolor{black}{<0.001}} & \textbf{\textcolor{black}{<0.001}} & \textbf{\textcolor{black}{<0.001}} & \textbf{\textcolor{black}{<0.001}} & \textbf{\textcolor{black}{<0.001}} & \textbf{\textcolor{black}{<0.001}}\tabularnewline
\hline 
\end{tabular}}
\end{table}
\textcolor{black}{To demonstrate the magnitude of LOTUS's relative
advantage over other algorithms, we further applied the Wilcoxon test
to quantitatively evaluate the results presented in Tables \ref{tab:Genal2ZoneEnv}
and \ref{tab:Genal2DexArt}. The results are shown in Tables \ref{tab:Genal2ZoneEnv_R1C3}
and \ref{tab:Genal2DexArt_R1C3}. / indicates the same value as LOTUS,
so no Wilcoxon test is performed.}

\textcolor{black}{Table \ref{tab:Genal2ZoneEnv_R1C3} reveals the following
observations: 1) In both LetterWorld and ZoneEnv task scenarios, using
p<0.05 as the threshold, LOTUS's advantages over other algorithms
are not significant in most cases. This is primarily because Depth
and Conjuncts inherently increase the challenge level of these task
scenarios. 2) RGTN exhibits higher p-values in most scenarios, indicating
that RGTN achieves higher returns across both LetterWorld and ZoneEnv
task settings.}

\textcolor{black}{From Table \ref{tab:Genal2DexArt_R1C3}, we can observe:
1) Using p<0.05 as the threshold, LOTUS significantly outperforms
other algorithms in most scenarios across the four operational tasks.
2) Although algorithms like T2TL, LSTM, and DeepLTL exhibit lower
overall success rates than LOTUS, they can achieve high success rates
in specific operational scenarios. For example, DeepLTL achieves a
p-value of 0.409 relative to LOTUS in the unseen set of the Trashcan
task.}

\subsection{Visualization of Trajectories in ZoneEnv}

\begin{figure}
\centering{} \includegraphics[scale=0.27]{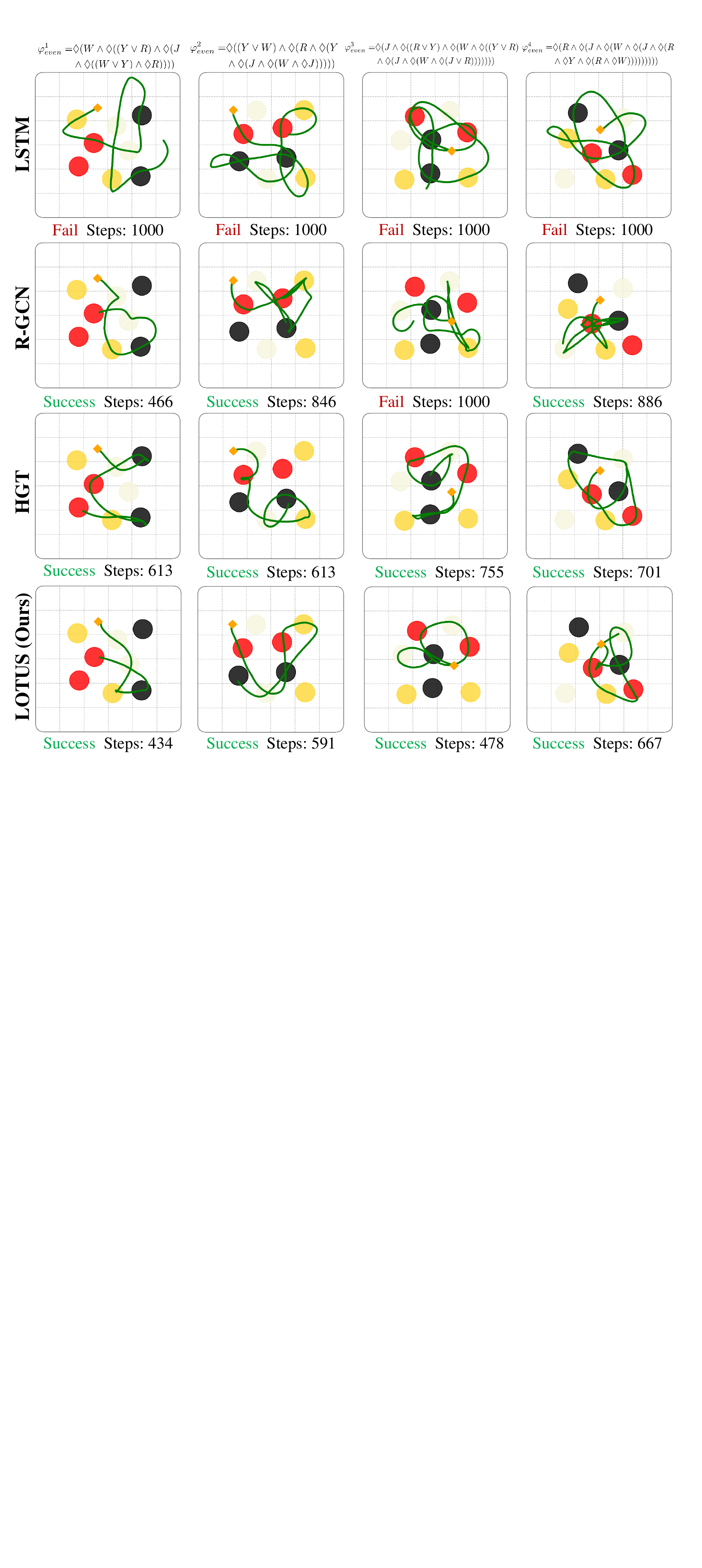}\caption{\label{fig:traj4zones} The trajectories
of different algorithms when tackling the increasingly partially-ordered tasks $\varphi_{even}^{1}$, $\varphi_{even}^{2}$, $\varphi_{even}^{3}$,
and $\varphi_{even}^{4}$ in ZoneEnv.}
\end{figure}
To further evaluate the generalization capabilities of different algorithms
in complex tasks, we design four increasingly complex  tasks $\varphi_{even}^{1}$,
$\varphi_{even}^{2}$, $\varphi_{even}^{3}$ and $\varphi_{even}^{4}$
from Partially-Ordered (Depth$\uparrow$) task, where $\varphi_{even}^{1}=\lozenge(W\wedge\lozenge((Y\vee R)\wedge\lozenge(J\wedge\lozenge((W\vee Y)\wedge\lozenge R))))$,
$\varphi_{even}^{2}=\lozenge((Y\vee W)\wedge\lozenge(R\wedge\lozenge(Y\wedge\lozenge(J\wedge\lozenge(W\wedge\lozenge J)))))$,
$\varphi_{even}^{3}=\lozenge(J\wedge\lozenge((R\vee Y)\wedge\lozenge(W\wedge\lozenge((Y\vee R)\wedge\lozenge(J\wedge\lozenge(W\wedge\lozenge(J\vee R)))))))$,
and $\varphi_{even}^{4}=\lozenge(R\wedge\lozenge(J\wedge\lozenge(W\wedge\lozenge(J\wedge\lozenge(R\wedge\lozenge Y\wedge\lozenge(R\wedge\lozenge W)))))))))$.
The physical interpretation of these instructions is provided on our
\href{https://lotus-website.github.io/}{website}. 
Fig. \ref{fig:traj4zones} presents the trajectory visualizations and the number of steps required by the different algorithms.

From Fig. \ref{fig:traj4zones}, we observe the following results.
(1) The agent guided by LSTM becomes ineffective in planning efficient solutions as the number of sub-goals continuous to  increase. 
(2) Although the agent guided by R-GCN can complete most tasks, it often exhausts all interaction steps when handling $\vee$ scenarios between different sub-goals, leading to task failures (e.g., $(Y\vee R)$ and $(J\vee R)$ in $\varphi_{even}^{3}$). 
(3) The agent guided by HGT can complete all tasks without failure, whereas our method (LOTUS) achieves successful task completion with fewer steps than all other methods across all task scenarios.

\subsection{Ablation Study}

\begin{table}
\caption{\label{tab:ablation2components}Success Rate of Different Methods \textcolor{black}{in Toilet and Faucet tasks}
when ablating different component designs.}

\centering{}\resizebox{0.41\textwidth}{!}{%%
\begin{tabular}{c|cc|cc}
\hline 
Task & \multicolumn{2}{c|}{Toilet} & \multicolumn{2}{c}{Faucet}\tabularnewline
\hline 
Split & Seen & Unseen & Seen & Unseen\tabularnewline
\hline 
No LTL & 0.850\textbf{ $\pm$ }0.01 & 0.550\textbf{ $\pm$ }0.010 & 0.790 \textpm{} 0.020 & 0.580 \textpm{} 0.070\tabularnewline
RGTN & 0.880 $\pm$ 0.020 & 0.590 $\pm$ 0.007 & 0.880 $\pm$ 0.012 & 0.747 $\pm$ 0.029\tabularnewline
TLPG & 0.774\textbf{ $\pm$} 0.031 & 0.569 $\pm$ 0.049 & 0.909\textbf{ $\pm$ }0.007 & 0.749\textbf{ $\pm$ }0.026\tabularnewline
LOTUS & \textbf{0.888 $\pm$ 0.004} & \textbf{0.728 $\pm$ 0.093} & \textbf{0.920 $\pm$ 0.022} & \textbf{0.794 $\pm$ 0.013}\tabularnewline
\hline 
\end{tabular}}
\end{table}
\textit{1) Component Ablation.} 
We design the following ablation study to evaluate
the influence of different components in No LTL, RGTN, TLPG, and LOTUS across diverse task scenarios. The results are presented in Table \ref{tab:ablation2components}. 
Several observations can be made. (1) Without task semantics provided by the LTL encoder, No LTL performs worst across all scenarios. 
(2) RGTN achieves competitive performance in all scenarios. By modeling the LTL encoder as a policy to sampling the corresponding representation, TLPG attains a higher success rate in the Faucet task than RGTN. 
(3) By using the bismulation metric to improve the stability of representations during training, LOTUS shows better generalization performance than both RGTN and TLPG.

\begin{table}
\caption{\label{tab:abaltion2arch}Success Rate of Different Methods \textcolor{black}{in Toilet and Faucet tasks} when ablating
different architecture designs.}

\centering{}\resizebox{0.41\textwidth}{!}{%%
\begin{tabular}{c|cc|cc}
\hline 
Task & \multicolumn{2}{c|}{Toilet} & \multicolumn{2}{c}{Faucet}\tabularnewline
\hline 
Split & Seen & Unseen & Seen & Unseen\tabularnewline
\hline 
$\mathrm{RGTN_{w/o}^{MRA}}$ & 0.754 $\pm$ 0.085 & 0.585 $\pm$ 0.144 & 0.754 $\pm$ 0.085 & 0.604 $\pm$ 0.144\tabularnewline
$\mathrm{RGTN_{w/o}^{MHA}}$ & 0.768 $\pm$ 0.072 & 0.581 $\pm$ 0.104 & 0.787 $\pm$ 0.059 & 0.594 $\pm$ 0.077\tabularnewline
$\mathrm{RGTN_{w/o}^{Residual}}$ & 0.819 $\pm$ 0.026 & 0.548 $\pm$ 0.011 & 0.856 $\pm$ 0.008 & 0.728 $\pm$ 0.030\tabularnewline
RGTN & \textbf{0.880 $\pm$ 0.020} & \textbf{0.590 $\pm$ 0.007} & \textbf{0.880 $\pm$ 0.012} & \textbf{0.747 $\pm$ 0.029}\tabularnewline
\hline 
\end{tabular}}
\end{table}
\textit{2) Architecture Ablation.} To further demonstrate the effectiveness of different architectural designs in improving the representation capabilities of RGTN, we conducted the following ablation experiments. 
(1) $\mathrm{RGTN_{w/o}^{MRA}}$, which removes MRA to assess the importance of Multi-Relation Attention. 
(2) $\mathrm{RGTN_{w/o}^{MHA}}$, which removes MHA to assess the importance of Multi-Head Attention.
(3) $\mathrm{RGTN_{w/o}^{Residual}}$, which removes the residual architecture from the Transformer to evaluate its contribution.

\begin{figure}
\centering{} \includegraphics[scale=0.135]{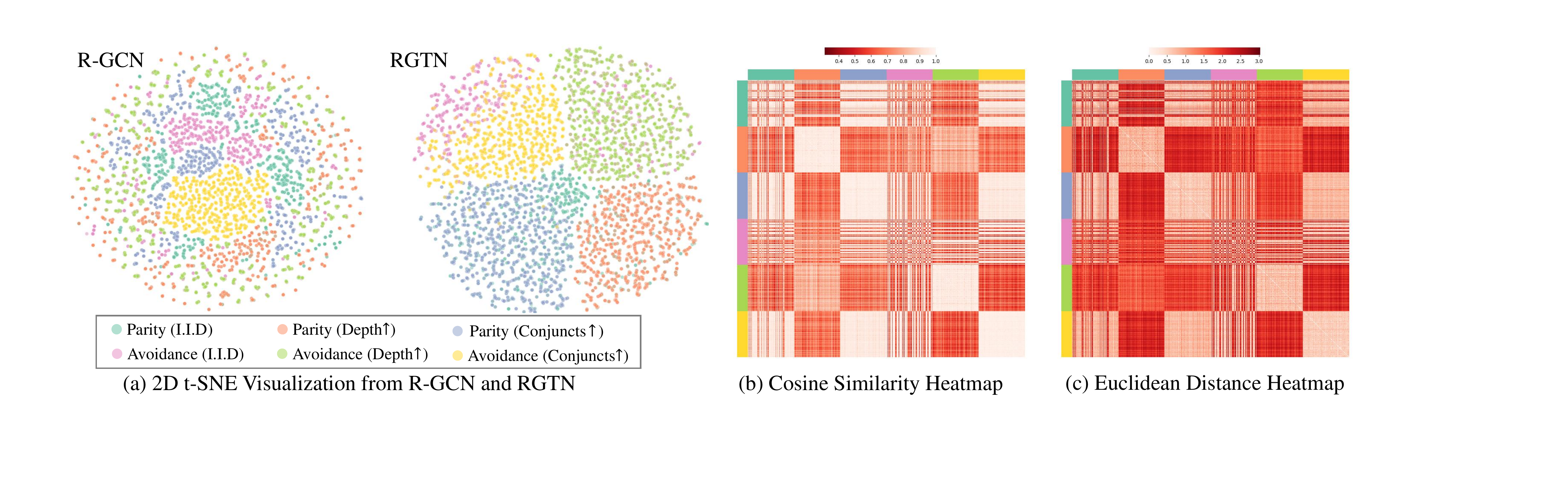}\caption{\textcolor{black}{\label{fig:visual_repre} Visualizations of (a)
the embedding spaces generated by R-GCN and RGTN, (b) }cosine similarity
heatmap \textcolor{black}{generated by RGTN, and (c) }Euclidean distance
heatmap \textcolor{black}{generated by RGTN.}}
\end{figure}

% \begin{figure}
% \centering{} \includegraphics[scale=0.125]{plot/S2R_Setting}\caption{\textcolor{black}{\label{fig:S2R_setting} The real-world experimental
% platform constructed for (a) $\varphi_{stack}$
% and (b) $\varphi_{cleanup}$.}}
% \end{figure}

\begin{figure}
\centering{} \includegraphics[scale=0.18]{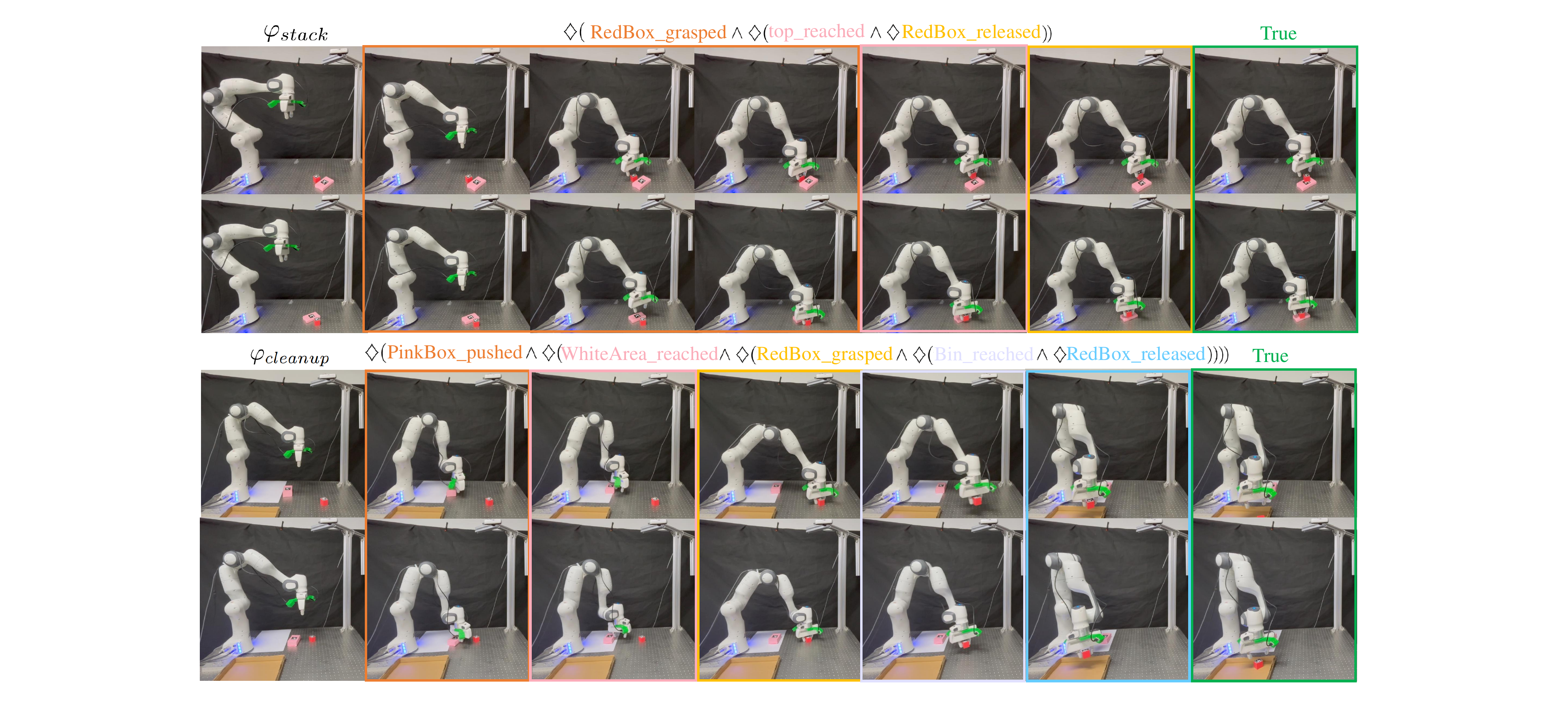}\caption{\textcolor{black}{\label{fig:S2R_snapshot} Snapshots of $\varphi_{stack}$
and $\varphi_{cleanup}$.}}
\end{figure}

% \begin{figure}
% \centering{} \includegraphics[scale=0.18]{plot/S2R_snapshot}\caption{\textcolor{black}{\label{fig:S2R_snapshot} Snapshots of $\varphi_{stack}$
% and $\varphi_{cleanup}$.}}
% \end{figure}

As shown in Table \ref{tab:abaltion2arch}, several observations can be made. 
(1) The performance of $\mathrm{RGTN_{w/o}^{MRA}}$ highlights the importance of Multi-Relation Attention in improving training efficiency and achieving strong performance on the Seen set across different tasks.
\textcolor{black}{This result demonstrates that MRA precisely
captures logical and temporal dependencies by modeling the four edge
types in LTL directed graphs, thereby learning this cross-scenario
invariance. This enhances the agent's learning ability for seen sets
and generalization capability for unseen sets. (2) Multi-Head Attention
(MHA) captures global information across multiple dimensions through
parallel attention heads. This approach not only integrates node information
to enhance the agent's understanding of overall task semantics but
also mitigates representation bias. Removing this component leads
to poor generalization performance, as evidenced by $\mathrm{RGTN_{w/o}^{MHA}}$
across all Unseen sets.} 
(3) Although removing the residual architecture has a smaller impact compared to MRA and MHA, the generalization success rate in the Unseen set of the Toilet task is 7.11\% lower than that of full RGTN model.

\begin{table}
\caption{\label{tab:abaltion2MRA}Success Rate of Different Methods in Toilet
and Faucet tasks when ablating different designs of MRA architecture}

\centering{}\resizebox{0.41\textwidth}{!}{%%
\begin{tabular}{c|cc|cc}
\hline 
\textcolor{black}{Task} & \multicolumn{2}{c|}{\textcolor{black}{Toilet}} & \multicolumn{2}{c}{\textcolor{black}{Faucet}}\tabularnewline
\hline 
\textcolor{black}{Split} & \textcolor{black}{Seen} & \textcolor{black}{Unseen} & \textcolor{black}{Seen} & \textcolor{black}{Unseen}\tabularnewline
\hline 
\textcolor{black}{$\mathrm{RGTN_{w/o}^{MRA}}$} & \textcolor{black}{0.754 $\pm$ 0.085} & \textcolor{black}{0.585 $\pm$ 0.144} & \textcolor{black}{0.754 $\pm$ 0.085} & \textcolor{black}{0.604 $\pm$ 0.144}\tabularnewline
\textcolor{black}{$\mathrm{RGTN_{fixed}^{MRA}}$} & \textcolor{black}{0.844 $\pm$ 0.048} & \textcolor{black}{0.644 $\pm$ 0.044} & \textcolor{black}{0.729 $\pm$ 0.103} & \textcolor{black}{0.537 $\pm$ 0.140}\tabularnewline
\textcolor{black}{$\mathrm{RGTN_{msg}^{MRA}}$} & \textcolor{black}{0.840 $\pm$ 0.054} & \textcolor{black}{0.590 $\pm$ 0.083} & \textcolor{black}{0.766 $\pm$ 0.035} & \textcolor{black}{0.609 $\pm$ 0.105}\tabularnewline
\textcolor{black}{RGTN} & \textbf{\textcolor{black}{0.880 $\pm$ 0.020}} & \textbf{\textcolor{black}{0.590 $\pm$ 0.007}} & \textbf{\textcolor{black}{0.880 $\pm$ 0.012}} & \textbf{\textcolor{black}{0.747 $\pm$ 0.029}}\tabularnewline
\hline 
\end{tabular}}
\end{table}
\textcolor{black}{\textit{3) Impact of Different Key Designs in MRA of RGTN.}
We compare the performance of different key designs in the Multi-Relation
Attention (MRA) on the Toilet and Faucet tasks. (1) $\mathrm{RGTN_{w/o}^{MRA}}$,
which removes MRA to evaluate the importance of MRA. (2) $\mathrm{RGTN_{msg}^{MRA}}$,
which shares information on edge types to reflect the importance of
independently transmitting edge type information in the original MRA.
(3) $\mathrm{RGTN_{fixed}^{MRA}}$, which fixes parameters on edge
types to reflect the importance of learnable edge-type parameters
for generalization in the original MRA. }

\textcolor{black}{As shown in Table \ref{tab:abaltion2MRA}, the following
observations can be made: 1) Other methods outperform $\mathrm{RGTN_{w/o}^{MRA}}$
on both seen and unseen sets, particularly in the Toilet task, demonstrating
MRA's importance for task representation. 2) Fixed edge type parameters
are not optimal for challenging tasks. The success rates of $\mathrm{RGTN_{fixed}^{MRA}}$
on both sets for the Faucet task are lower than other algorithms;
3) Although $\mathrm{RGTN_{msg}^{MRA}}$ achieves better performance
than $\mathrm{RGTN_{w/o}^{MRA}}$ and $\mathrm{RGTN_{fixed}^{MRA}}$
on some sets of the Toilet and Faucet tasks, its sharing of edge type
information weakens edge-type features. This further reduces its learning
efficiency and generalization performance on the Faucet task compared
to RGTN.}

\begin{table}
\caption{\label{tab:abaltion2MHA}Success Rate of Different Methods in Toilet
and Faucet tasks when ablating different designs of MHA architecture}

\centering{}\resizebox{0.41\textwidth}{!}{%%
\begin{tabular}{c|cc|cc}
\hline 
\textcolor{black}{Task} & \multicolumn{2}{c|}{\textcolor{black}{Toilet}} & \multicolumn{2}{c}{\textcolor{black}{Faucet}}\tabularnewline
\hline 
\textcolor{black}{Split} & \textcolor{black}{Seen} & \textcolor{black}{Unseen} & \textcolor{black}{Seen} & \textcolor{black}{Unseen}\tabularnewline
\hline 
\textcolor{black}{$\mathrm{RGTN_{w/o}^{MHA}}$} & \textcolor{black}{0.768 $\pm$ 0.072} & \textcolor{black}{0.581 $\pm$ 0.104} & \textcolor{black}{0.787 $\pm$ 0.059} & \textcolor{black}{0.594 $\pm$ 0.077}\tabularnewline
\textcolor{black}{$\mathrm{RGTN_{stand}^{MHA}}$} & \textcolor{black}{0.817 $\pm$ 0.049} & \textcolor{black}{0.605 $\pm$ 0.047} & \textcolor{black}{0.781 $\pm$ 0.032} & \textcolor{black}{0.652 $\pm$ 0.052}\tabularnewline
\textcolor{black}{$\mathrm{RGTN_{all}^{MHA}}$} & \textcolor{black}{0.846 $\pm$ 0.039} & \textcolor{black}{0.581 $\pm$ 0.066} & \textcolor{black}{0.788 $\pm$ 0.042} & \textcolor{black}{0.628 $\pm$ 0.020}\tabularnewline
\textcolor{black}{RGTN} & \textbf{\textcolor{black}{0.880 $\pm$ 0.020}} & \textbf{\textcolor{black}{0.590 $\pm$ 0.007}} & \textbf{\textcolor{black}{0.880 $\pm$ 0.012}} & \textbf{\textcolor{black}{0.747 $\pm$ 0.029}}\tabularnewline
\hline 
\end{tabular}}
\end{table}
\textcolor{black}{\textit{4) Impact of Different Key Designs in MHA of RGTN.}
We compare the performance of different key designs in the Multi-Head
Attention on the Toilet and Faucet tasks. (1) $\mathrm{RGTN_{w/o}^{MHA}}$,
which removes MHA to evaluate the importance of MHA. (2) $\mathrm{RGTN_{stand}^{MHA}}$,
which computes attention information as Transformer to reflect the
importance of independently propagating attention information on edge
types in the original MHA. (3) $\mathrm{RGTN_{all}^{MHA}}$, which
computes attention across all nodes per head to reflect the importance
of edge-based attention on adjacent nodes in the original MHA.}

\textcolor{black}{As shown in Table \ref{tab:abaltion2MHA}, the following
observations can be made: 1) $\mathrm{RGTN_{w/o}^{MHA}}$ performs
worse than other algorithms on both Toilet and Faucet tasks, demonstrating
MHA's importance for task representation. 2) $\mathrm{RGTN_{stand}^{MHA}}$
shows slightly better performance than $\mathrm{RGTN_{w/o}^{MHA}}$,
but its failure to compute edge-based feature attention on the head
limits its success rates on both tasks; 3) For $\mathrm{RGTN_{all}^{MHA}}$,
while computing attention across all nodes in the head achieves good
performance on the Toilet task, its success rate on challenging tasks
(e.g., Faucet task) declines by 10.45\% and 15.93\% relative to RGTN,
respectively. }

\begin{table}
\caption{\label{tab:abaltion2fuse}Success Rate of Different Methods in Toilet
and Faucet tasks when ablating different designs of the fusion information
module}

\centering{}\resizebox{0.45\textwidth}{!}{%%
\begin{tabular}{c|cc|cc}
\hline 
\textcolor{black}{Task} & \multicolumn{2}{c|}{\textcolor{black}{Toilet}} & \multicolumn{2}{c}{\textcolor{black}{Faucet}}\tabularnewline
\hline 
\textcolor{black}{Split} & \textcolor{black}{Seen} & \textcolor{black}{Unseen} & \textcolor{black}{Seen} & \textcolor{black}{Unseen}\tabularnewline
\hline 
\textcolor{black}{$\mathrm{RGTN_{add}^{fuse}}$} & \textcolor{black}{0.773}\textbf{\textcolor{black}{{} $\pm$ }}\textcolor{black}{0.040} & \textcolor{black}{0.557}\textbf{\textcolor{black}{{} $\pm$ }}\textcolor{black}{0.004} & \textcolor{black}{0.699}\textbf{\textcolor{black}{{} $\pm$ }}\textcolor{black}{0.038} & \textcolor{black}{0.475}\textbf{\textcolor{black}{{} $\pm$ }}\textcolor{black}{0.067}\tabularnewline
\textcolor{black}{$\mathrm{RGTN_{weighted}^{fuse}}$} & \textcolor{black}{0.782}\textbf{\textcolor{black}{{} $\pm$ }}\textcolor{black}{0.048} & \textcolor{black}{0.590}\textbf{\textcolor{black}{{} $\pm$ }}\textcolor{black}{0.077} & \textcolor{black}{0.781}\textbf{\textcolor{black}{{} $\pm$ }}\textcolor{black}{0.071} & \textcolor{black}{0.547}\textbf{\textcolor{black}{{} $\pm$ }}\textcolor{black}{0.075}\tabularnewline
\textcolor{black}{$\mathrm{RGTN_{concat}^{fuse}}$ (Ours)} & \textbf{\textcolor{black}{0.880 $\pm$ 0.020}} & \textbf{\textcolor{black}{0.590 $\pm$ 0.007}} & \textbf{\textcolor{black}{0.880 $\pm$ 0.012}} & \textbf{\textcolor{black}{0.747 $\pm$ 0.029}}\tabularnewline
\hline 
\end{tabular}}
\end{table}
\textcolor{black}{\textit{5) Impact of different attention fusion projection
module in RGTN.} We compare the performance of different fusion designs
in the Toilet and Faucet tasks. (1) $\mathrm{RGTN_{add}^{fuse}}$
employs a direct summation approach to fuse MRA and MHA information.
And (2) $\mathrm{RGTN_{weighted}^{fuse}}$ utilizes learnable weight
parameters to fuse MRA and MHA information.}

\textcolor{black}{As shown in Table \ref{tab:abaltion2fuse}, the following
observations can be observed: 1) When directly adding MRA and MHA
information, the performance of $\mathrm{RGTN_{add}^{fuse}}$ significantly
lags behind $\mathrm{RGTN_{weighted}^{fuse}}$ and $\mathrm{RGTN_{concat}^{fuse}}$.
This may stem from addition forcing the homogenization of distinct
features within LTL task semantics, diluting critical semantics and
hindering the precise capture of complex the logic in LTL formula.
2) When employing learnable weighting to fuse MRA and MHA information,
the performance of $\mathrm{RGTN_{weighted}^{fuse}}$ improves over
$\mathrm{RGTN_{add}^{fuse}}$ but remains inferior to $\mathrm{RGTN_{concat}^{fuse}}$.
This may stem from learnable weighting typically requiring more precise
manual parameter tuning for optimal results. Without such tuning,
it struggles to adapt to the complex semantics of LTL tasks, resulting
in insufficient precision and generalization in feature integration.
3) Our approach $\mathrm{RGTN_{concat}^{fuse}}$, compared to $\mathrm{RGTN_{add}^{fuse}}$,
fully preserves the semantic information of features across different
levels in LTL tasks. Unlike $\mathrm{RGTN_{weighted}^{fuse}}$, it
eliminates the need for redundant manual parameter tuning. By expanding
feature dimensionality, $\mathrm{RGTN_{concat}^{fuse}}$ enhances
expressive capacity while improving feature adaptability in generalized
scenarios.}

\subsection{Visualization of LTL Task Embedding Space}
To further visualize the representation capability of RGTN in the embedding space, we sampled six task types from LetterEnv: Avoidance (i.i.d), Avoidance (Depth$\uparrow$), Avoidance (Conjuncts$\uparrow$), Partially-Ordered (i.i.d), Partially-Ordered (Depth$\uparrow$), and Partially-Ordered (Conjuncts$\uparrow$).
Two methods were employed to assess task representation
performance: (1) applying t-SNE \cite{maaten2008visualizing} to project the embeddings into a two-dimensional space, and (2) computing the pairwise cosine similarity and Euclidean distance between embeddings.

The visualization results are shown in Fig. \ref{fig:visual_repre}. Several observation can be made. 
\textcolor{black}{(1) RGTN exhibits strong
task-type clustering ability, producing an embedding space that clearly
separates tasks with distinct structural properties. As shown in the
2D t-SNE visualization in Fig. \ref{fig:visual_repre}(a), Avoidance
Tasks and Partially-Ordered Tasks form two well-defined and non-overlapping
clusters, with embeddings of the same task type tightly grouped. In
contrast, R-GCN yields scattered and intermingled embeddings,
making task differentiation difficult. This result indicates that
RGTN effectively captures task structural semantics through MRA for
edge-type modeling and MHA for global dependency integration, thereby
enabling discriminative task representations that support downstream
generalization. (2) For partially ordered tasks, embeddings from the
i.i.d. and Depth (\textuparrow ) scenarios exhibit substantial overlap
in the t-SNE projection. Although the Depth (\textuparrow ) setting
increases task complexity by deepening sub-goal nesting, the underlying
hierarchical ordering of sub-goals remains unchanged. This overlap
indicates that RGTN encodes core structural semantics rather than
superficial propositional complexity, mapping tasks with shared logical
structure to similar representations. (3) The cosine similarity shown
in Fig. \ref{fig:visual_repre}(b) and Euclidean distance heatmaps
shown in Fig. \ref{fig:visual_repre}(c) quantitatively validate the
discriminability and generalization capacity of the embedding space.
Cosine similarities exceed 0.85 within the same task type (e.g., i.i.d.,
Depth (\textuparrow ), and Conjuncts (\textuparrow ) of Avoidance
tasks) and drop below 0.3 across different types, indicating clear
structural separation. Consistently, Euclidean distances are small
within task types (<1.2) but large across types (>2.8). Notably, out-of-distribution
tasks (e.g., Conjuncts (\textuparrow ) tasks) remain close to their
base task type (\ensuremath{\approx}1.5) while staying well separated
from others, enabling effective knowledge transfer based on structural
similarity and supporting strong generalization.}

\subsection{Computational Cost and Memory Usage }

\begin{table}
\caption{\label{tab: time_consume-R2}The computational cost (second) of different
methods consuming for each training epoch in LetterWorld and ZoneEnv
tasks}

\centering{}\resizebox{0.45\textwidth}{!}{%%
\begin{tabular}{c|c|c|c|c}
\hline 
\textcolor{black}{Task} & \multicolumn{2}{c|}{\textcolor{black}{LetterWorld (second)}} & \multicolumn{2}{c}{\textcolor{black}{ZoneEnv (second)}}\tabularnewline
\hline 
\textcolor{black}{Task Type} & \textcolor{black}{Avoidance} & \textcolor{black}{Partially-Ordered } & \textcolor{black}{Avoidance } & \textcolor{black}{Partially-Ordered}\tabularnewline
\hline 
\textcolor{black}{No LTL} & \textbf{\textcolor{black}{10.416 + 1.255}} & \textbf{\textcolor{black}{8.583 + 2.625}} & \textbf{\textcolor{black}{112.583 + 12.552}} & \textbf{\textcolor{black}{179.833 + 5.907}}\tabularnewline
\hline 
\textcolor{black}{LSTM} & \textcolor{black}{11.667 + 0.425} & \textcolor{black}{10.583 + 0.816} & \textcolor{black}{210.167 + 21.602} & \textcolor{black}{288.917 + 10.274}\tabularnewline
\textcolor{black}{T2TL} & \textcolor{black}{10.333 + 0.471} & \textcolor{black}{11.667 + 0.471} & \textcolor{black}{263.667 + 2.160} & \textcolor{black}{273.083 + 39.802}\tabularnewline
\textcolor{black}{LTL2Action} & \textcolor{black}{12.750 + 2.867} & \textcolor{black}{13.667 + 3.091} & \textcolor{black}{258.583 + 10.198} & \textcolor{black}{308.000 + 10.677}\tabularnewline
\textcolor{black}{HGT} & \textcolor{black}{15.435 + 3.658} & \textcolor{black}{14.333 + 4.784} & \textcolor{black}{292.917 + 21.297} & \textcolor{black}{243.417 + 8.340}\tabularnewline
\textcolor{black}{cDFAs} & \textcolor{black}{17.250 + 0.943} & \textcolor{black}{12.583 + 0.943} & \textcolor{black}{344.583 + 33.885} & \textcolor{black}{314.250 + 18.264}\tabularnewline
\textcolor{black}{DeepLTL} & \textcolor{black}{16.585 $\pm$ 1.276} & \textcolor{black}{15.525 $\pm$ 0.752} & \textcolor{black}{249.667 $\pm$ 10.277} & \textcolor{black}{256.667 + 13.257}\tabularnewline
\hline 
\textcolor{black}{RGTN} & \textcolor{black}{15.333 + 4.110} & \textcolor{black}{10.667 + 2.160} & \textcolor{black}{290.917 + 19.602} & \textcolor{black}{305.833 + 15.578}\tabularnewline
\textcolor{black}{TLPG} & \textcolor{black}{16.333 + 2.843} & \textcolor{black}{14.250 + 2.364} & \textcolor{black}{296.333 + 10.164} & \textcolor{black}{312.667 + 5.276}\tabularnewline
\textcolor{black}{LOTUS} & \textcolor{black}{19.000 + 1.633} & \textcolor{black}{17.167 + 4.710} & \textcolor{black}{300.583 + 12.499} & \textcolor{black}{317.583 + 7.483}\tabularnewline
\hline 
\end{tabular}}
\end{table}
\begin{table}
\caption{\label{tab: gpu_consume-R2}The GPU memory usage (GB) of different methods for each training epoch in LetterWorld and ZoneEnv
tasks}

\centering{}\resizebox{0.4\textwidth}{!}{%%
\begin{tabular}{c|c|c|c|c}
\hline 
\textcolor{black}{Task} & \multicolumn{2}{c|}{\textcolor{black}{LetterWorld}} & \multicolumn{2}{c}{\textcolor{black}{ZoneEnv}}\tabularnewline
\hline 
\textcolor{black}{Type} & \textcolor{black}{Avoidance} & \textcolor{black}{Partially-Ordered} & \textcolor{black}{Avoidance} & \textcolor{black}{Partially-Ordered }\tabularnewline
\hline 
\textcolor{black}{No LTL} & \textcolor{black}{0.770} & \textcolor{black}{0.770} & \textcolor{black}{0.620} & \textcolor{black}{0.620}\tabularnewline
\textcolor{black}{LSTM} & \textcolor{black}{0.660} & \textcolor{black}{0.602} & \textcolor{black}{0.645} & \textcolor{black}{0.529}\tabularnewline
\textcolor{black}{T2TL} & \textcolor{black}{0.732} & \textcolor{black}{0.992} & \textcolor{black}{0.732} & \textcolor{black}{0.918}\tabularnewline
\textcolor{black}{LTL2Action} & \textcolor{black}{0.488} & \textcolor{black}{0.490} & \textcolor{black}{0.990} & \textcolor{black}{0.949}\tabularnewline
\textcolor{black}{HGT} & \textcolor{black}{0.649} & \textcolor{black}{0.672} & \textcolor{black}{1.215} & \textcolor{black}{1.066}\tabularnewline
\textcolor{black}{cDFAs} & \textcolor{black}{1.084} & \textcolor{black}{1.093} & \textcolor{black}{1.213} & \textcolor{black}{1.232}\tabularnewline
\textcolor{black}{DeepLTL} & \textcolor{black}{0.787} & \textcolor{black}{0.787} & \textcolor{black}{0.808} & \textcolor{black}{0.828}\tabularnewline
\hline 
\textcolor{black}{RGTN} & \textcolor{black}{0.656} & \textcolor{black}{0.668} & \textcolor{black}{1.070} & \textcolor{black}{1.130}\tabularnewline
\textcolor{black}{TLPG} & \textcolor{black}{0.793} & \textcolor{black}{0.764} & \textcolor{black}{1.194} & \textcolor{black}{1.247}\tabularnewline
\textcolor{black}{LOTUS} & \textcolor{black}{0.955} & \textcolor{black}{0.920} & \textcolor{black}{1.297} & \textcolor{black}{1.369}\tabularnewline
\hline 
\end{tabular}}
\end{table}
\textcolor{black}{We analyze the time consumed per epoch and GPU utilization
during algorithm execution across two task types: LetterWorld and
ZoneEnv. Results are presented in Tables \ref{tab: time_consume-R2}and
\ref{tab: gpu_consume-R2}. Observations include: 1) Algorithms without LTL representation guidance exhibit the lowest time consumption
across all four task scenarios. While it
reduces time consumption, its performance is also degraded compared with other algorithms.
2) We note that LOTUS does not significantly increase runtime compared
to other algorithms, owing to our optimization of computation and
update processes based on the DGL \cite{wang2019dgl}. 3) RGTN achieves lower
GPU utilization across all four scenarios while maintaining comparable
performance to other algorithms. 4) Even with the additional computational
cost introduced by LOTUS's bisimulation metric, it maintains low GPU
utilization across the aforementioned scenarios and runs normally
on most graphics cards.}

\subsection{Longer Task Evaluation}

\begin{table}
\caption{\label{tab:long_ltl_tasks_with_bs}Total Return performance of Different
Methods in deepen Partially-Ordered Tasks of LetterWorld.}

\centering{}\resizebox{0.45\textwidth}{!}{%%
\begin{tabular}{c|cc|cc}
\hline 
\textcolor{black}{Partially-Ordered Tasks} & \multicolumn{4}{c}{\textcolor{black}{LetterWorld}}\tabularnewline
\hline 
\textcolor{black}{Number of Sub-goals} & \textcolor{black}{cDFAs} & \textcolor{black}{LTL2Action} & \textcolor{black}{RGTN} & \textcolor{black}{LOTUS}\tabularnewline
\hline 
\textcolor{black}{25} & \textcolor{black}{0.113 $\pm$ 0.316} & \textcolor{black}{0.806 $\pm$ 0.394} & \textcolor{black}{0.963 $\pm$ 0.187} & \textbf{\textcolor{black}{0.976 $\pm$ 0.103}}\tabularnewline
\textcolor{black}{26} & \textcolor{black}{0.080 $\pm$ 0.271} & \textcolor{black}{0.660 $\pm$ 0.473} & \textcolor{black}{0.930 $\pm$ 0.255} & \textbf{\textcolor{black}{0.938 $\pm$ 0.170}}\tabularnewline
\textcolor{black}{27} & \textcolor{black}{0.043 $\pm$ 0.203} & \textcolor{black}{0.556 $\pm$ 0.496} & \textcolor{black}{0.826 $\pm$ 0.378} & \textbf{\textcolor{black}{0.852 $\pm$ 0. 307}}\tabularnewline
\textcolor{black}{28} & \textcolor{black}{0.023 $\pm$ 0.150} & \textcolor{black}{0.416 $\pm$ 0.493} & \textcolor{black}{0.726 $\pm$ 0.445} & \textbf{\textcolor{black}{0.740 $\pm$ 0.438}}\tabularnewline
\textcolor{black}{29} & \textcolor{black}{0.010 $\pm$ 0.099} & \textcolor{black}{0.303 $\pm$ 0.459} & \textcolor{black}{0.683 $\pm$ 0.465} & \textbf{\textcolor{black}{0.716 $\pm$ 0.479}}\tabularnewline
\textcolor{black}{30} & \textcolor{black}{0.000 $\pm$ 0.000} & \textcolor{black}{0.230 $\pm$ 0.420} & \textcolor{black}{0.523 $\pm$ 0.499} & \textbf{\textcolor{black}{0.620 $\pm$ 0.490}}\tabularnewline
\textcolor{black}{31} & \textcolor{black}{0.000 $\pm$ 0.000} & \textcolor{black}{0.137 $\pm$ 0.344} & \textcolor{black}{0.376 $\pm$ 0.484} & \textbf{\textcolor{black}{0.434 $\pm$ 0.501}}\tabularnewline
\textcolor{black}{32} & \textcolor{black}{0.000 $\pm$ 0.000} & \textcolor{black}{0.067 $\pm$ 0.250} & \textcolor{black}{0.270 $\pm$ 0.443} & \textbf{\textcolor{black}{0.292 $\pm$ 0.393}}\tabularnewline
\textcolor{black}{33} & \textcolor{black}{0.000 $\pm$ 0.000} & \textcolor{black}{0.054 $\pm$ 0.226} & \textcolor{black}{0.165 $\pm$ 0.371} & \textbf{\textcolor{black}{0.196 $\pm$ 0.353}}\tabularnewline
\textcolor{black}{34} & \textcolor{black}{0.000 $\pm$ 0.000} & \textcolor{black}{0.011 $\pm$ 0.108} & \textcolor{black}{0.089 $\pm$ 0.285} & \textbf{\textcolor{black}{0.105 $\pm$ 0.207}}\tabularnewline
\textcolor{black}{35} & \textcolor{black}{0.000 $\pm$ 0.000} & \textcolor{black}{0.008 $\pm$ 0.094} & \textcolor{black}{0.044 $\pm$ 0.206} & \textbf{\textcolor{black}{0.063 $\pm$ 0.188}}\tabularnewline
\hline 
\end{tabular}}
\end{table}
\textcolor{black}{We conduct the following experiment to verify the
performance limits of RGTN on longer and more deeply nested LTL formulas.
As shown in Table \ref{tab:long_ltl_tasks_with_bs}, the observations
can be made: 1) With the same number of sub-goals, RGTN achieves higher
returns than cDFAs and LTL2Action, demonstrating the superiority of
RGTN's architecture in encoding LTL formulas that represent task semantics.
2) As the number of sub-objectives increases, all algorithms exhibit
varying degrees of performance degradation. When the number of sub-goals
exceeds 29, cDFAs struggles to complete tasks, while LTL2Action can
accomplish a few tasks but achieves less than half the return of RGTN.
These findings indicate that GAT used by cDFAs lacks sufficient semantic
representation capabilities for LTL tasks and has weak adaptability
to LTL formula structures. R-GCN exhibits limited representation capabilities,
lacking global semantic modeling capacity and demonstrating poor adaptability
in path-based generalization tasks. 3) When the number of sub-tasks
reaches 35, all methods struggle to complete the task, indicating
that it has reached the encoding limit of RGTN.
Future research integrating RGTN with knowledge graphs may
enhance its reasoning and encoding capabilities in more complex task
scenarios.}

\subsection{Real--World Experimental Results}

To further validate the effectiveness of LOTUS, we evaluated its performance on the Stack and Cleanup tasks from \cite{nasiriany2022augmenting} in real world settings. A Realsense D455 camera was used to capture object poses labeled with AprilTag \cite{olson2011apriltag} and a Franka Emika Panda robot executed the actions generated by the algorithm, as illustrated in Fig. \ref{fig:framework2lotus}(d).

The snapshots of two tasks are shown in Fig. \ref{fig:S2R_snapshot}, and a full video of the experiment is available on our \href{https://lotus-website.github.io/}{website}. 

\subsection{Limitations}

Experimental results demonstrate that LOTUS, supported by the RGTN architecture and direct updates, can effectively improve
agent performance across diverse task scenarios. However, several challenges remain. \textcolor{black}{First, due to
the expressive limitations of sc-LTL, LOTUS can only guide robots
in finite-horizon tasks and lacks $\omega$-regular completeness compared
to full LTL. Furthermore, this limitation prevents the LOTUS framework
from learning infinite-horizon tasks constructed using operators such
as $\square$(always), for example, patrol tasks. Research into integrating
B$\mathrm{\ddot{u}}$chi automaton and task-level representations
may extend the applicability of our method to infinite-horizon tasks.} \textcolor{black}{Second, decoupling the
LTL encoder from the RL gradient flow during the early stages of training
may lead to a misalignment between the downstream RL agent\textquoteright s
action performance and the task semantics extracted by the LTL encoder
due to unstable task representations. In addition to the bisimulation
metric method proposed in \ref{subsec:LBS2LTL_Encoder}, one potentially effective approach
is to jointly optimize the LTL encoder and the policy gradients
of the downstream RL agent.} Third, the proposed task representation algorithms must be trained from scratch to effectively guide agents across different scenarios. Incorporating LTL representations with large language models (LLMs) could further improve the training efficiency of the task representation module and enhance its guidance capabilities.

\section{CONCLUSIONS\label{sec:Conclusion}}

In this work, we introduce LOTUS, a temporal logic inspired universal task representation framework that can be seamlessly integrated into any existing RL algorithm to enhance agent performance across diverse task scenarios.
Within LOTUS, we introduce RGTN to encode LTL instructions, thereby improving agent performance and extracting task semantics. The LTL encoder is further treated as a policy, enabling reward-driven updates that enhance the efficiency of the task representation module. To improve stability, we leverage the bisimulation metric, which provides theoretical guarantees for LTL representations, including behavioral equivalence, optimality fidelity, and trajectory robustness. Extensive experiments demonstrate that LOTUS consistently outperforms existing methods.

\textcolor{black}{In our future work, we will focus on three key directions.
First, we will explore integrating existing work with automata-based
representations to enhance agents' performance in more complex task
scenarios (e.g., infinite horizon LTL task). Second, we will investigate
elevating the representational capacity of current LTL encoders to
LLM levels, enabling the agent to extract task semantics and thereby
improve learning efficiency across diverse task settings. 
% Finally,
% we will dedicate efforts to further advancing RGTN's graph encoding
% and reasoning capabilities. This foundation will enable the construction
% of corresponding knowledge graphs to improve agent' generalization
% performance.
\textcolor{black}{Finally, we will focus
on addressing the potential misalignment between the task representations
extracted by the LTL encoder and the performance of the downstream
agent. This improvement will further enhance the ability and efficiency
of LTL representations in guiding the agent to complete tasks.}
}

\bibliographystyle{IEEEtran}
\bibliography{Bib4RGTN}

\begin{IEEEbiography}
[{\includegraphics[width=1in,height=1.25in,clip,keepaspectratio]{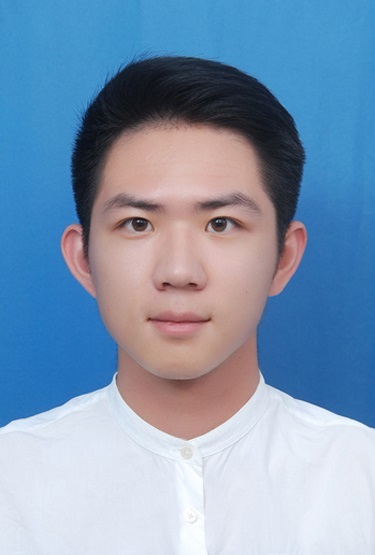}}]{Hao Zhang}
received the B.S. degree in mechanical engineering and automation from the Hefei
University of Technology, Hefei, Anhui, China, in
2020. He is currently pursuing the Ph.D. degree
in automation with the University of Science and
Technology of China, Hefei.

His current research interests include formal
methods in robotics, reinforcement learning, and
dexterous manipulation.
\end{IEEEbiography}

\begin{IEEEbiography}[{%
\includegraphics[width=1in,height=1.25in,keepaspectratio]{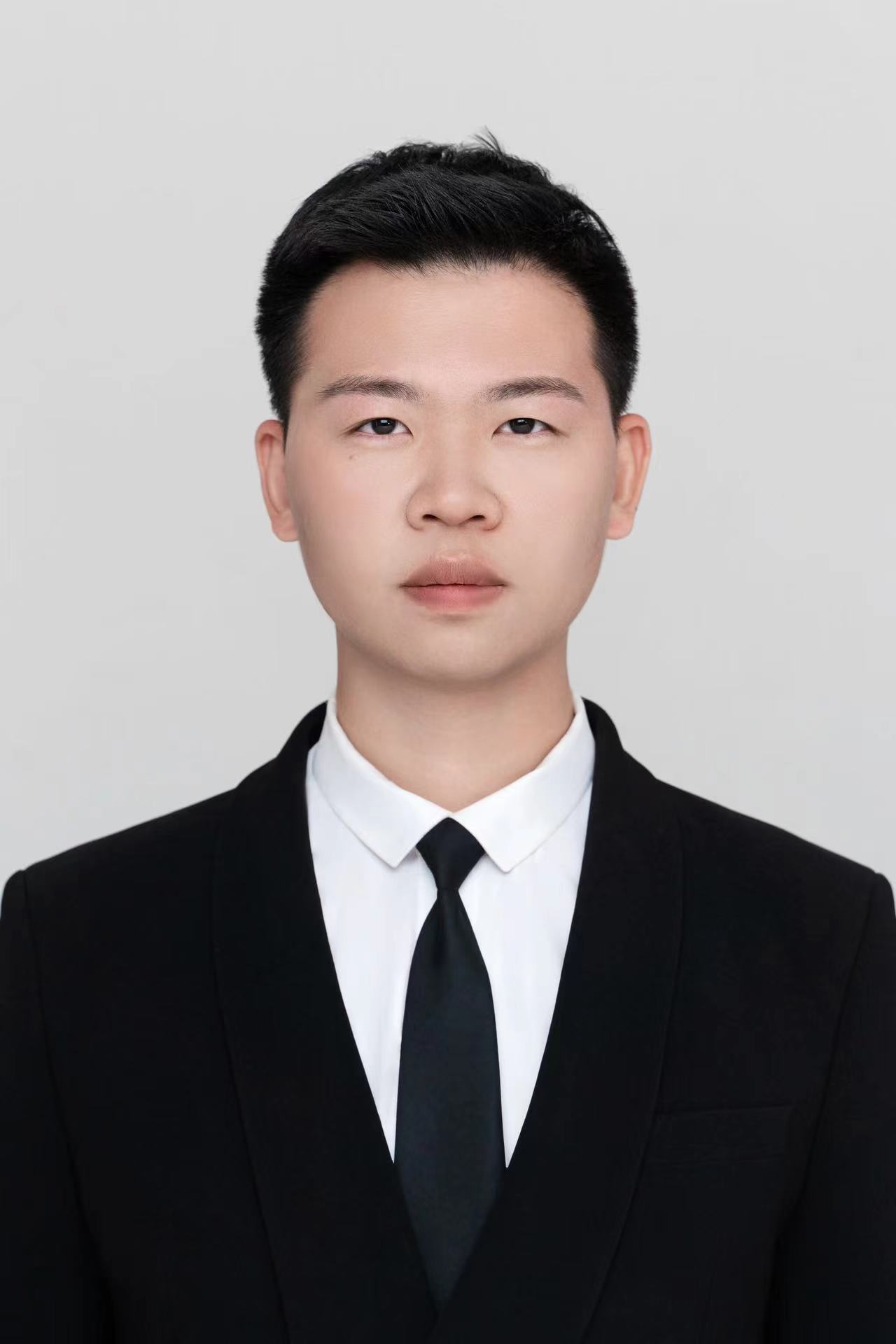}%
}]{Zhangli Zhou} received the B.E. degree from Wuhan University of Technology, in 2016, and the Ph.D. degree from the University of Science and Technology of China, in 2024. He is currently a Post-Doctoral Researcher with the University of Science and Technology of China, Hefei, China.
His research interests include robotics and reactive task and motion planning.
\end{IEEEbiography}
% \vspace{-3.0\baselineskip}

\begin{IEEEbiography}
[{\includegraphics[width=1in,height=1.25in,clip,keepaspectratio]{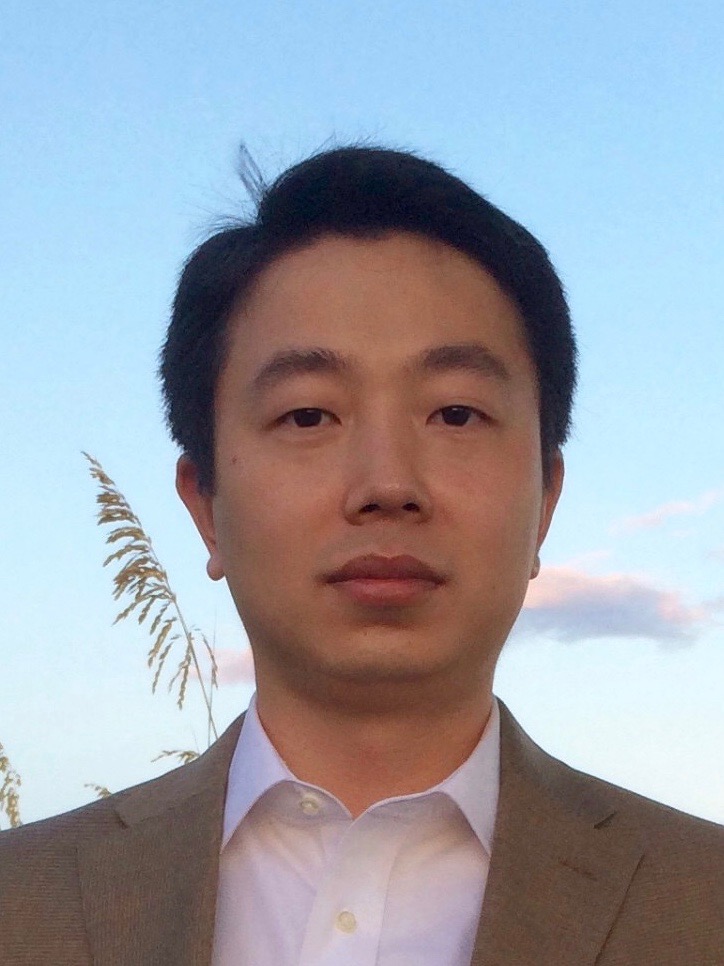}}]{Zhen Kan}
(Senior Member, IEEE) received the
Ph.D. degree from the Department of Mechanical
and Aerospace Engineering, University of Florida,
Gainesville, FL, USA, in 2011.

He is currently a Professor with the Department
of Automation, University of Science and
Technology of China, Hefei, China. His research interests include networked
control systems, nonlinear control, formal methods, and robotics. He currently serves on the program committees of several
internationally recognized scientific and engineering conferences and is an
Associate Editor of \textit{IEEE Transactions on Automatic Control}
\end{IEEEbiography}

\end{document}